\newcites{S}{Supplement References}
\newtheorem{proposition}{Proposition}
\icmltitlerunning{Deep GMRFs}
\begin{document}

\twocolumn[
\icmltitle{Deep Gaussian Markov Random Fields}




\begin{icmlauthorlist}
\icmlauthor{Per Sid\'en}{liu}
\icmlauthor{Fredrik Lindsten}{liu}
\end{icmlauthorlist}

\icmlaffiliation{liu}{Division of Statistics and Machine Learning, Department of Computer and Information Science, Link\"{o}ping University, Link\"{o}ping, Sweden}

\icmlcorrespondingauthor{Per Sid\'en}{per.siden@liu.se}


\vskip 0.3in
]



\printAffiliationsAndNotice{}  
\begin{abstract}
Gaussian Markov random fields (GMRFs) are probabilistic graphical models widely used in spatial statistics and related fields to model dependencies over spatial structures. We establish a formal connection between GMRFs and convolutional neural networks (CNNs). Common GMRFs are special cases of a generative model where the inverse mapping from data to latent variables is given by a 1-layer linear CNN. This connection allows us to generalize GMRFs to multi-layer CNN architectures, effectively increasing the order of the corresponding GMRF in a way which has favorable computational scaling. We describe how well-established tools, such as autodiff and variational inference, can be used for simple and efficient inference and learning of the deep GMRF. We demonstrate the flexibility of the proposed model and show that it outperforms the state-of-the-art on a dataset of satellite temperatures, in terms of prediction and predictive uncertainty.
\end{abstract}

\section{Introduction}
\label{introduction}
Convolutional neural networks (CNNs) are the \emph{de facto} standard model in computer vision when training on a large set of images. 
Images are lattice-based data with local dependencies and thus have clear connections with spatial statistics. However, many spatial problems lack the abundance of data common to computer vision applications, and often we need to build a model based on a single ``image'', i.e., data recorded over some spatial field. Models such as deep image prior \cite{ulyanov2018deep} have shown that CNN architectures can encode useful spatial priors even in such situations, but the dominant approach is still to model the spatial dependencies explicitly using, e.g., Gaussian processes (GPs) \cite{williams2006gaussian} or Gaussian Markov random fields (GMRFs) \cite{Isham2004}.

In this paper we show a formal connection between GMRFs applied to lattice data and CNNs. Common GMRFs based on nearest neighbour interactions can be viewed as special cases of a generative model where the inverse mapping from the spatial field $\mathbf{x}$ to a latent variable $\mathbf{z}$ is given by a 1-layer linear CNN. Since common GP models have previously been shown to be tightly linked with GMRFs \cite{Lindgren2011}, this connection applies to certain GPs as well.

Modeling the inverse mapping ($\mathbf{x} \rightarrow \mathbf{z}$) using a CNN results in an auto-regressive (AR) spatial model, whereas using a CNN for the forward mapping ($\mathbf{z} \rightarrow \mathbf{x}$) would correspond to a moving average (MA) model (see, e.g., \cite{Ljung:1999} for a discussion on AR and MA models in a time series context).
This has the important implication that we obtain an infinite receptive field (i.e., infinite range on the spatial dependencies in $\mathbf{x}$) even with a 1-layer model. Indeed, this is a well known property of GMRFs.

The interpretation of a GMRF as a 1-layer CNN opens up for a straightforward generalization to multi-layer architectures, resulting in \emph{deep GMRFs} (DGMRFs). Even when all layers are linear this has important practical implications: adding layers corresponds to increasing the auto-regressive order of the GMRF, i.e., edges are added to the GMRF graph which improves its expressivity. For conventional GMRF algorithms, simply adding more edges can have have a significant impact on the computational complexity due to a reduced degree of sparsity of the resulting precision matrix. For a DGMRF, on the other hand, the structure imposed by a multi-layer CNN architecture results in a favorable computational scaling. Indeed, using variational inference for the latent variables, we obtain a learning algorithm that scales linearly both with the number of dimensions in the spatial field (``pixels'') and the number of layers (AR order). Furthermore, viewing GMRFs through the lens of deep learning allows us to use well-established toolboxes for, e.g., automatic differentiation and GPU training, to facilitate simple learning of DGMRFs.

After a review of GMRFs in Section~\ref{sec:Background}, we introduce the DGMRF model in Section~\ref{sec:deepGMRFs}. Section~\ref{sec:Inference} describes how to efficiently train the model, and how to compute the posterior predictive distribution, including uncertainty. We discuss related work in Section~\ref{sec:Related-Work}. The results in Section~\ref{sec:Results} illustrate how DGMRFs are adaptive models, with outstanding predictive ability. Section~\ref{sec:Conclusions} concludes.


\section{Background}\label{sec:Background}

\subsection{Gaussian Markov Random Fields}
A Gaussian Markov random field (GMRF) $\mathbf{x}$ is an $N$-dimensional
Gaussian vector with mean $\bm{\mu}$ and precision (inverse covariance) matrix $\bm{Q}$, so that 
\(
\mathbf{x}\sim\mathcal{N}\left(\boldsymbol{\mu},\mathbf{Q}^{-1}\right).
\)
For each GMRF there exists a graph $\mathcal{G}=\left(\mathcal{V},\mathcal{E}\right)$,
with vertices $\mathcal{V}$ that correspond to the elements in $\mathbf{x}$,
and edges $\mathcal{E}$ that define their conditional independencies.
For example, the vertices could represent the
pixels in an image, with the edges connecting neighboring pixels.
Formally,
\[
\left\{ i,j\right\} \notin\mathcal{E}\Longleftrightarrow x_{i}\perp x_{j}|\mathbf{x}_{-ij},\,\,\,\,\,\text{for all\,\,\,}i\neq j,
\]
where $\mathbf{x}_{-ij}$ refers to all elements except $i$ and $j$,
meaning that two elements $x_{i}$ and $x_{j}$, that are not neighbors
in the graph, are conditionally independent given the rest. 
Importantly, the edges $\mathcal{E}$ also determine the zero-pattern
in the precision matrix $\mathbf{Q}$, as every GMRF has the property
\[
\left\{ i,j\right\} \in\mathcal{E}\Longleftrightarrow Q_{ij}\neq0,\,\,\,\,\,\text{for all\,\,\,}i\neq j.
\]
This means that a sparsely connected graph $\mathcal{G}$ results
in a sparse precision matrix $\mathbf{Q}$, which gives great computational
gains compared to working with the dense covariance matrix in many
large-scale applications.

\subsection{Example: GMRF Defined Using Convolution}
As an example, consider the second-order intrinsic GMRF or thin-plate
spline model \citep{Isham2004}, which can be defined by
\(
\mathbf{x}\sim\mathcal{N}\left(\mathbf{0},\left(\mathbf{G}^\top\mathbf{G}\right)^{-1}\right),
\)
with
\begin{equation}
G_{ij}=\begin{cases}
4 & ,\,\text{for}\,i=j\\
-1 & ,\,\text{for}\,i\sim j\\
0 & ,\,\text{otherwise}
\end{cases},\label{eq:G_ij}
\end{equation}
where $i\sim j$ denotes 
adjacency\footnote{It is perhaps more standard to define $G_{ii}$ equal to the number
of neighbors of pixel $i$, which makes a difference in the image
border. Our definition is convenient here as it makes $\mathbf{G}$ invertible.}. Imputing missing pixel values conditioned on its second-order neighborhood
using this prior is equivalent to bicubic interpolation. The non-zero
elements of each row $i$ of $\mathbf{G}$ and $\mathbf{Q}$ , with
respect to neighboring pixels in 2D, can be compactly represented
through the stencils
\begin{align}
&\mathbf{w}_{\mathbf{G}}: \;
\left[
\begin{smallmatrix}
 & -1\\
-1 & 4 & -1\\
 & -1
 \end{smallmatrix}
\right]
&&\mathbf{w}_{\mathbf{Q}}: \;
\left[
\begin{smallmatrix}
 &  & 1\\
 & 2 & -8 & 2\\
1 & -8 & 20 & -8 & 1\\
 & 2 & -8 & 2\\
 &  & 1
\end{smallmatrix}
\right].
\label{eq:stencils}
\end{align}
An equivalent definition of this model is to first define $\mathbf{z}\sim\mathcal{N}(\mathbf{0},\mathbf{I})$
and then $\mathbf{x}$ through the inverse transform 
\begin{equation}
\mathbf{z}=\mathbf{G}\mathbf{x}.\label{eq:z=00003DGx}
\end{equation}
It is trivial that $\mathbf{x}$ is Gaussian with mean $\mathbf{0}$
and it can be readily verified that
\(
\text{Cov}(\mathbf{x})=\mathbf{G}^{-1}\mathbf{I}\mathbf{G}^{-\top}=\left(\mathbf{G}^\top\mathbf{G}\right)^{-1}.
\)
In a third, equivalent definition the inverse transform $\mathbf{z}=\mathbf{G}\mathbf{x}$
is instead written as a convolution
\[
\mathbf{Z}=\text{conv}\left(\mathbf{X},\mathbf{w}_{\mathbf{G}}\right),
\]
where $\mathbf{Z}$ and $\mathbf{X}$ are image representations of
the vectors $\mathbf{z}$ and $\mathbf{x}$. The stencil $\mathbf{w}_{\mathbf{G}}$ in Eq.~(\ref{eq:stencils}) is here used as a filter and $\text{conv}()$ denotes \textit{same} convolution (\texttt{padding="SAME"}), for the equivalence to hold. This observation, that a certain kind
of GMRF can be defined with the convolution operator, a filter and
an auxiliary variable is a key observation for DGMRFs, which are defined in Section~\ref{sec:deepGMRFs}.
\subsection{Link Between GMRFs and Gaussian Processes}\label{sec:SPDE}
Another instructive view of GMRFs is as a represetation of a Gaussian
processes (GP) with Matérn kernel \cite{Lindgren2011}. This result
comes from a stochastic partial differential equation (SPDE) of the
form
\[
\left(\kappa^{2}-\Delta\right)^{\gamma}\tau x(\mathbf{s})=W(\mathbf{s}),
\]
which can be shown to have a GP solution $x(\mathbf{s})$
with Matérn covariance function \cite{whittle1954,whittle1963}. Here,
$W(\mathbf{s})$ is Gaussian white noise in a continuous
coordinate $\mathbf{s}$, $\Delta$ is the Laplacian operator, and
$\kappa$, $\tau$ and $\gamma$ are hyperparameters that appear in
the Matérn kernel. In particular, $\gamma$ controls the smoothness
of the GP. Moreover, for positive integer values of $\gamma$, a numerical
finite difference approximation to the SPDE, on the integer lattice,
is given by
\begin{equation}
\tau\left(\kappa^{2}\mathbf{I}+\mathbf{G}\right)^{\gamma}\mathbf{x}=\mathbf{z},\label{eq:discreteSPDE}
\end{equation}
with $\mathbf{G}$ defined as in Eq.~(\ref{eq:G_ij}). As in Eq.~(\ref{eq:z=00003DGx}),
this inverse transform describes a GMRF, here with precision matrix
$\mathbf{Q}=\tau^{2}((\kappa^{2}\mathbf{I}+\mathbf{G})^{\gamma})^\top(\kappa^{2}\mathbf{I}+\mathbf{G})^{\gamma}$.
Firstly, this provides a sparse representation of
a GP as a GMRF, with a discrepancy that can be reduced by making the
discretization of the SPDE finer. Secondly, it gives an interpretation of GMRFs as models with similar properties as GPs.

\subsection{GMRFs as Spatial Priors}
GMRFs are commonly used as spatial priors for latent variables, which is common in spatial statistics and image analysis. In the simplest case, the data $\mathbf{y}$ is modeled as Gaussian, and conditionally independent given $\mathbf{x}$
\begin{align*}
p(\mathbf{y}|\mathbf{x}) & =\prod_{i\in\mathcal{M}}p(y_{i}|x_{i}),
&
y_{i}|x_{i} & \sim\mathcal{N}\left(y_{i}|x_{i},\sigma^{2}\right),
\end{align*}
where $\mathcal{M}\subseteq \{1,\ldots,N\}$ is the set of observed pixels with $M=|\mathcal{M}|$. Typical problems include inpainting ($M<N$), and denoising ($\sigma^2>0$), where the target is to reconstruct the latent $\mathbf{x}$. Conveniently, in this situation the GMRF prior is conjugate, so the posterior is also a GMRF
\begin{align}
\begin{split}
\mathbf{x}|\mathbf{y}&\sim\mathcal{N}(\tilde{\boldsymbol{\mu}},\tilde{\mathbf{Q}}^{-1}),\,\,\,\,\text{with} \\
\tilde{\mathbf{Q}}&=\mathbf{Q}+\frac{1}{\sigma^{2}}\mathbf{I}_\mathbf{m},\,\,\,\,\,\,\,\,\,\tilde{\boldsymbol{\mu}}=\tilde{\mathbf{Q}}^{-1}\left(\mathbf{Q}\boldsymbol{\mu}+\frac{1}{\sigma^{2}}\mathbf{y}\right).\label{eq:posteriorGMRF}
\end{split}
\end{align}
Here, the observation mask $\mathbf{m}$ has value 0 for missing pixels and 1 elsewhere, $\mathbf{y}$ are the observations with value 0 at missing pixels and $\mathbf{I}_\mathbf{m}$ is the identity matrix with diagonal element 0 at missing pixels. Although the posterior is on closed form, the computational cost associated with $\tilde{\mathbf{Q}}^{-1}$, needed for the posterior mean and marginal standard deviations, can be high. This is discussed more in Section~\ref{sec:Inference}.

\section{Deep Gaussian Markov Random Fields\label{sec:deepGMRFs}}
\subsection{Linear DGMRFs}
We define a linear DGMRF $\mathbf{x}$ 
using an auxiliary standard
Gaussian vector $\mathbf{z}$ and a bijective function $\mathbf{g_{\boldsymbol{\theta}}}:\mathbb{R}^{N}\rightarrow\mathbb{R}^{N}$,
\begin{align*}
\mathbf{z} & \sim\mathcal{N}\left(\mathbf{0},\mathbf{I}\right), &
\mathbf{z} & =\mathbf{g}_{\boldsymbol{\theta}}(\mathbf{x}).
\end{align*}
In other words, we define $\mathbf{x}$ through the inverse transform $\mathbf{g}_{\boldsymbol{\theta}}^{-1}$. The function $\mathbf{g_{\boldsymbol{\theta}}}(\mathbf{x})$
is for now assumed to be linear, so we can write $\mathbf{g_{\boldsymbol{\theta}}}(\mathbf{x})=\mathbf{G}_{\boldsymbol{\theta}}\mathbf{x}+\mathbf{b}_{\boldsymbol{\theta}}$, where $\mathbf{G}_{\boldsymbol{\theta}}$ is an invertible square matrix. The non-linear case is considered in Section~\ref{sec:non-linear-deep-GMRFs}. We will specify  $\mathbf{g_{\boldsymbol{\theta}}}(\mathbf{x})$
using a CNN with $L$ layers.
Let $\mathbf{Z}_{l}$ be a tensor of dimension $H\times W\times C$,
with height $H$, width $W$ and $C$ channels, and let $\mathbf{z}_{l}=\text{vec}\left(\mathbf{Z}_{l}\right)$
be its vectorized version, with length $N=HWC$. The output of layer
$l$ is defined as
\begin{equation}
\mathbf{Z}_{l}=\text{conv}\left(\mathbf{Z}_{l-1},\mathbf{w}_{l}\right)+\mathbf{b}_{l},
\label{eq:layer-transition}
\end{equation}
where $\mathbf{w}_{l}$ is a 4D-tensor containing $C\times C$ 2D-filters and $\mathbf{b}_{l}$
is a set of $C$ biases. Here, $\text{conv}()$ refers to multichannel same convolution, 
more details are given in Section~\ref{subsec:multichannel}. In particular, we define $\mathbf{Z}_{L}\triangleq\mathbf{Z}$
and $\mathbf{Z}_{0}\triangleq\mathbf{X}$. An illustration of the model is given in Figure~\ref{fig:plot_model}. The model parameters $\boldsymbol{\theta}=\left\{\left(\mathbf{w}_{l},\mathbf{b}_{l}\right):l=1,\ldots,L\right\} $ will be omitted in the following for brevity.

Just as for normalizing flows \cite{dinh2014nice,rezende15}, 
$\mathbf{g}$ can be seen as a sequence of bijections $\mathbf{g}=\mathbf{g}_{L}\circ\mathbf{g}_{L-1}\circ\cdots\circ\mathbf{g}_{1}$, each with corresponding transform matrix $\mathbf{G}_l$.  Since $\mathbf{g}$ is linear, $\mathbf{x}$ is a GMRF with density
\begin{equation}
p(\mathbf{x})=\frac{\left|\det\left(\mathbf{G}\right)\right|}{\left(2\pi\right)^{N/2}}\exp\left(-\frac{1}{2}\left(\mathbf{x}-\boldsymbol{\mu}\right)^{\top}\mathbf{G}^{\top}\mathbf{G}\left(\mathbf{x}-\boldsymbol{\mu}\right)\right),\label{eq:x-linear-density}
\end{equation}
with $\mathbf{G}=\mathbf{G}_{L}\mathbf{G}_{L-1}\cdots\mathbf{G}_{1}$ and the mean $\boldsymbol{\mu}=-\mathbf{G}^{-1}\mathbf{b}$ where $\mathbf{b}$ can be computed as $\mathbf{b}=\mathbf{g}(\mathbf{0})$.  The determinant $\det(\mathbf{G})$ can be computed as
\begin{equation}
\textstyle
\det\left(\mathbf{G}\right)=\prod_{l=1}^{L}\det\left(\mathbf{G}_{l}\right)
\label{eq:detG}
\end{equation}
and we address the problem of making this computation fast below in Section~\ref{sec:conv-filters}.
This GMRF has precision matrix $\mathbf{Q}=\mathbf{G}^\top\mathbf{G}$, that is guaranteed to be positive (semi-)definite for all $\boldsymbol{\theta}$, which gives an advantage compared to modeling $\mathbf{Q}$ directly.
\begin{figure}
\begin{center}
\includegraphics[width=86mm,trim={30mm 30mm 80mm 20mm},clip]{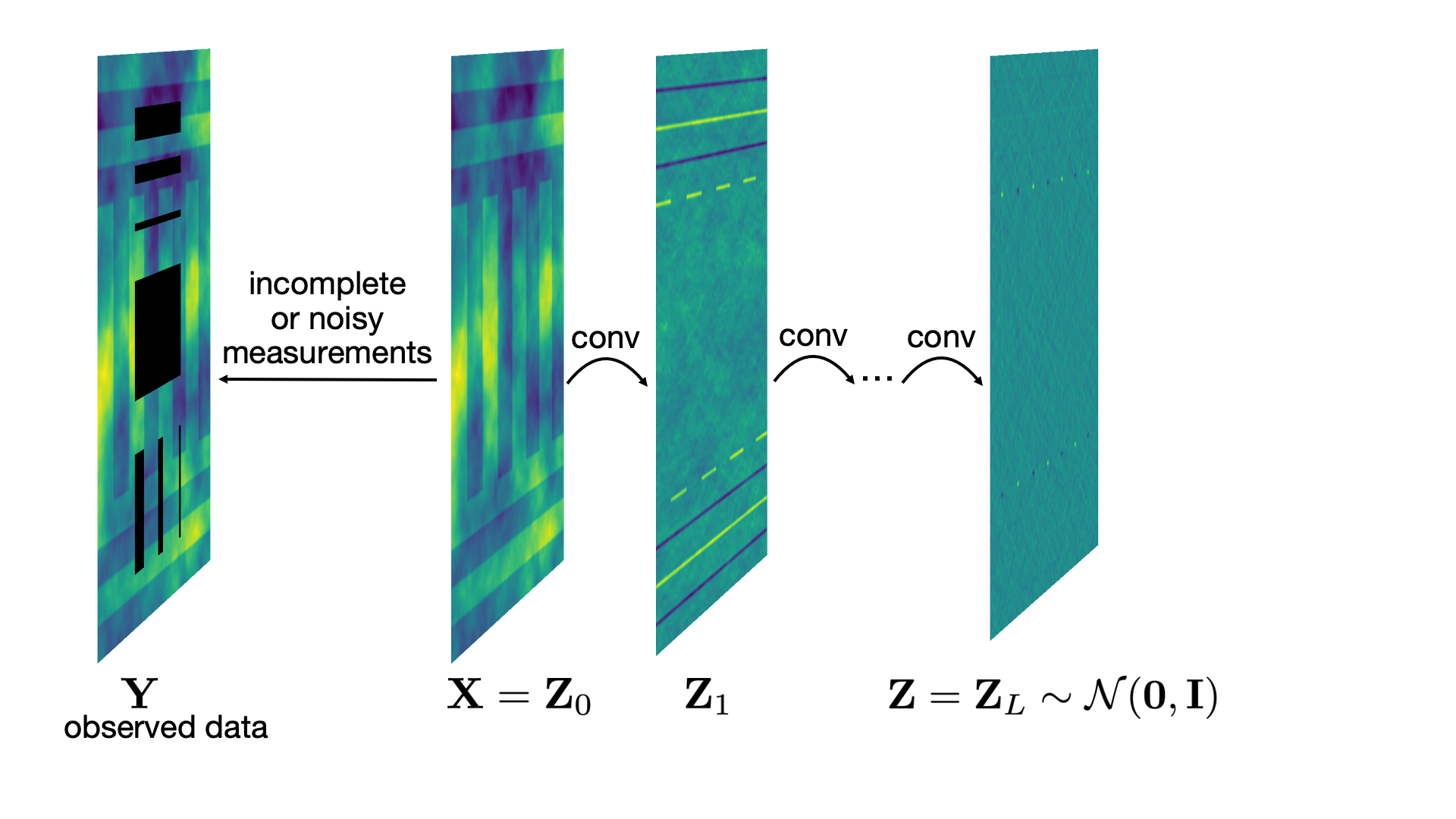}
\vspace{-5mm}
\caption{Illustration of the deep GMRF. The observed data $\mathbf{Y}$ are incomplete measurements of the latent DGMRF $\mathbf{X}$. The prior distribution of $\mathbf{X}$ is defined by a CNN that transforms the input $\mathbf{X}$ to the output image $\mathbf{Z}$ which has iid. standard normal pixel values.}\label{fig:plot_model}
\vspace{-3mm}
\end{center}
\end{figure}
The reason for defining $\mathbf{x}$ through an inverse transform $\mathbf{z} = \mathbf{g}(\mathbf{x})$, rather than a forward transform, is twofold. Firstly, it establishes a formal connection to traditional GMRFs, see Proposition~\ref{proposition-deep-GMRF-Matern} below. Secondly, it gives an AR model with infinite receptive field, meaning that the output prediction at each pixel depends on all the pixels, rather than just on nearby pixels, even for a one-layer model. Compared to dilated CNNs \citep[e.g.][]{Yu2015}, which achieve long (yet finite) range dependencies through several layers of dilated filters, this is a simpler construction.

\subsection{Computationally Cheap Convolutional Filters}\label{sec:conv-filters}
In this paper, we consider two special forms of convolutional filters: plus ($+$) filters and sequential (seq) filters, with forms
\begin{align}
&+:\;
\left[
\begin{smallmatrix}
 & a_3\\
a_2 & a_1 & a_4\\
 & a_5
\end{smallmatrix}
\right] 
&
& \text{seq}:\; \left[
\begin{smallmatrix}
 \vphantom{a_0} & &\\
 & a_1 & a_2\\
a_3 & a_4 & a_5
\end{smallmatrix}\right]
\label{eq:filters}
\end{align}
where $a_1,\ldots,a_5\in\mathbb{R}$ are parameters to be learned, and the empty positions are fixed to zero. The benefit of filters with these special designs is that they correspond to transforms for which $\det(\mathbf{G}_{l})$
in Eq.~(\ref{eq:detG}) can be cheaply computed.

Defining a linear DGMRF through a sequence of convolutions using different small filters is of course equivalent to defining it using a single convolution with a larger filter, apart from at the boundary. The equivalent larger filter can be obtained by sequentially convolving the smaller filters with themselves. The main motivation for using the deep definition is that it has cheap determinant calculations, using Eq.~(\ref{eq:detG}) and the $+$- and seq-filters, which is not the case for a general larger filter. Also, a deep definition results in fewer parameters to learn. For example, four full $3\times3$-filters has 36 parameters compared to 81 parameters for the corresponding $9\times9$-filter. Finally, the deep convolutional architecture has proven successful in countless applications of neural networks, which makes it plausible that also GMRFs should benefit from the same inductive biases. The addition of non-linearities between the layers is discussed in Section~\ref{sec:non-linear-deep-GMRFs}. We now discuss the two filter types and their determinant computation for the singlechannel case and discuss the multichannel construction in Section~\ref{subsec:multichannel}.
\subsubsection{$+$-Filters}
We begin with two propositions, the first connecting linear DGMRFs with $+$-filters to the GMRFs in Section~\ref{sec:Background}, and the second giving the cheap determinant computation.
\begin{proposition}\label{proposition-deep-GMRF-Matern}
The second-order intrinsic GMRF, as well as the GMRF approximation of a Matérn GP, are special cases of the linear DGMRF model with $+$-filters.
\end{proposition}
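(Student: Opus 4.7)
The plan is to exhibit, for each of the two GMRFs in the statement, an explicit choice of number of layers, $+$-filters and biases so that the composite linear map $\mathbf{g}(\mathbf{x}) = \mathbf{G}_L \cdots \mathbf{G}_1 \mathbf{x} + \mathbf{b}$ coincides (up to the well-known equivalence $\mathbf{z} = \mathbf{G}^\star\mathbf{x}$) with the inverse transform defining the target GMRF. Throughout, all biases can be set to zero, so the mean is $\mathbf{0}$ and only the precision structure needs to be matched.

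First I would handle the intrinsic GMRF. The stencil $\mathbf{w}_{\mathbf{G}}$ in Eq.~(\ref{eq:stencils}) is already of $+$-type with $(a_1,a_2,a_3,a_4,a_5)=(4,-1,-1,-1,-1)$. Taking $L=1$, $\mathbf{w}_1 = \mathbf{w}_{\mathbf{G}}$, and $\mathbf{b}_1 = \mathbf{0}$, the single-layer transform $\mathbf{z} = \text{conv}(\mathbf{x},\mathbf{w}_1)$ is exactly $\mathbf{z}=\mathbf{G}\mathbf{x}$ from Eq.~(\ref{eq:z=00003DGx}), yielding precision $\mathbf{G}^\top\mathbf{G}$ as required.

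For the Matérn GMRF approximation in Eq.~(\ref{eq:discreteSPDE}) I would first observe that the matrix $\kappa^2 \mathbf{I} + \mathbf{G}$ is itself represented by a $+$-filter: it coincides with $\mathbf{w}_{\mathbf{G}}$ except that the center is shifted from $4$ to $\kappa^2+4$. Denote this filter by $\mathbf{w}_{\kappa}$, a $+$-filter with parameters $(\kappa^2+4,-1,-1,-1,-1)$. Because convolution with $\mathbf{w}_{\kappa}$ on the lattice corresponds (in the \texttt{SAME}-padding sense used throughout) to multiplication by the matrix $\kappa^2\mathbf{I}+\mathbf{G}$, raising this matrix to the integer power $\gamma$ corresponds to $\gamma$ successive convolutions with $\mathbf{w}_\kappa$. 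To incorporate the scalar $\tau$ I would absorb it into the filters by using $L=\gamma$ layers where layer $l$ employs the scaled $+$-filter $\tau^{1/\gamma}\mathbf{w}_{\kappa}$ and $\mathbf{b}_l=\mathbf{0}$; by Eq.~(\ref{eq:detG}) and the multiplicativity of the underlying linear maps, the composite matrix is exactly $\mathbf{G}=\tau(\kappa^2\mathbf{I}+\mathbf{G})^{\gamma}$, which matches Eq.~(\ref{eq:discreteSPDE}) and yields precision $\mathbf{Q}=\tau^{2}((\kappa^{2}\mathbf{I}+\mathbf{G})^{\gamma})^\top(\kappa^{2}\mathbf{I}+\mathbf{G})^{\gamma}$ as stated in Section~\ref{sec:SPDE}.

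The only mildly delicate point is the correspondence between a $+$-filter convolution with \textit{same}-padding and the matrix $\kappa^2 \mathbf{I}+\mathbf{G}$ at the image boundary; this is already addressed by the footnote to Eq.~(\ref{eq:G_ij}), which fixes $G_{ii}=4$ everywhere (including borders) precisely so that the convolution-matrix equivalence holds exactly. I do not expect any genuine obstacle beyond this bookkeeping, since once this boundary convention is adopted both constructions reduce to matching $+$-filter coefficients and counting layers.
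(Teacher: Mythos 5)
Your proposal is correct and follows essentially the same route as the paper: identify the intrinsic GMRF with a one-layer $+$-filter having weights $(4,-1,-1,-1,-1)$, and realize the Matérn approximation by stacking $\gamma$ layers of the $+$-filter with center $\kappa^2+4$. The only cosmetic difference is that you absorb $\tau$ by scaling each filter by $\tau^{1/\gamma}$ (and use exactly $L=\gamma$ layers), whereas the paper allows $L\geq\gamma$ with the remaining layers set to identities and leaves the $\tau$ factor implicit; your treatment of this scalar is if anything slightly more explicit.
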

\begin{proof}
The non-zero pattern of the $+$-filter is the same as that of $\mathbf{w}_\mathbf{g}$ in Eq.~(\ref{eq:stencils}), meaning that a one-layer linear DGMRF with the same filter weights is equivalent to the second-order intrinsic GMRF. Similarly,  Eq.~(\ref{eq:discreteSPDE}) can be written as a linear DGMRF with $L$ layers of $+$-filters, for  $L \geq \gamma$. This requires that $\gamma$ of the layers have the same filter with $a_1=4+\kappa^2$ and $a_2=\cdots=a_5=-1$, and the other $L-\gamma$ layers to be identity functions.
\end{proof}
\begin{proposition}\label{proposition-+-filter-determinant}
The linear transform matrix $\mathbf{G}_+$ defined by singlechannel same convolution of an $H\times W$ image with the $+$-filter defined in Eq.~(\ref{eq:filters}), has determinant
\begin{align*}
    \det\left(\mathbf{G}_{+}\right)=\prod_{i=1}^{H}\prod_{j=1}^{W}&\left[a_{1}+2\sqrt{a_{3}a_{5}}\cos\left(\pi\frac{i}{H+1}\right)+\right.\\
    &\left.2\sqrt{a_{2}a_{4}}\cos\left(\pi\frac{j}{W+1}\right)\right],
\end{align*}
where $\sqrt{-1}$ is treated as imaginary. Computing the determinant thus has complexity $\mathcal{O}(N)$.
\end{proposition}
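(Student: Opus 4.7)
The plan is to exhibit $\mathbf{G}_+$ as a Kronecker sum of two one-dimensional tridiagonal Toeplitz matrices, and then read off its spectrum in closed form.

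First I would fix a vectorization of the image, say $x = \text{vec}(X)$ stacked column by column, and convert the stencil action into matrix form using the identity $\text{vec}(AXB)=(B^\top\otimes A)\text{vec}(X)$. Because the convolution is \texttt{SAME} (zero padded at the boundary), horizontal shifts by $\pm 1$ in the column index correspond to the ordinary (non-circulant) sub- and super-diagonal shift matrices $J_W^{\pm}$, and similarly for vertical shifts. Matching coefficients against the $+$-stencil gives
$$
\mathbf{G}_+ \;=\; I_W \otimes \bigl(a_3 J_H^- + a_5 J_H^+\bigr) \;+\; \bigl(a_1 I_W + a_2 J_W^- + a_4 J_W^+\bigr)\otimes I_H,
$$
which is the Kronecker sum $A_W \oplus A_H$ of the tridiagonal Toeplitz matrices $A_H = T_H(0,a_3,a_5)$ and $A_W = T_W(a_1,a_2,a_4)$.

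Next I would invoke the classical spectrum of $T_n(a,b,c)$, whose eigenvalues are $\lambda_k = a + 2\sqrt{bc}\cos\!\bigl(\tfrac{k\pi}{n+1}\bigr)$ for $k=1,\ldots,n$. This is obtained by the diagonal similarity $D=\mathrm{diag}(1,r,\ldots,r^{n-1})$ with $r=\sqrt{b/c}$, which conjugates $T_n(a,b,c)$ into the symmetric tridiagonal matrix $T_n(a,\sqrt{bc},\sqrt{bc})$, the standard discrete Dirichlet Laplacian with textbook eigenvalues. The formula extends formally to $bc\leq 0$ by letting $\sqrt{bc}$ be imaginary, in which case the complex eigenvalues occur in conjugate pairs and the determinant remains real.

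Finally, I would apply the standard fact that the spectrum of a Kronecker sum $A\oplus B$ is the multiset of pairwise sums $\lambda_i(A)+\mu_j(B)$, so the eigenvalues of $\mathbf{G}_+$ are exactly $a_1+2\sqrt{a_3 a_5}\cos\!\bigl(\tfrac{i\pi}{H+1}\bigr)+2\sqrt{a_2 a_4}\cos\!\bigl(\tfrac{j\pi}{W+1}\bigr)$ for $(i,j)\in\{1,\ldots,H\}\times\{1,\ldots,W\}$. Multiplying these $N=HW$ values yields the claimed determinant, and since each factor costs $\mathcal{O}(1)$ the total cost is $\mathcal{O}(N)$. The only delicate point is the first step: verifying that \texttt{SAME} convolution genuinely produces the truncated shift matrices $J_n^{\pm}$ (rather than circulant shifts, which would give a discrete Fourier spectrum instead), and keeping the Kronecker-factor orientation consistent with the chosen vectorization; everything downstream is a direct application of well-known spectral facts.
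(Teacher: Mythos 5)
Your proposal is correct and follows essentially the same route as the paper's own proof: write $\mathbf{G}_+$ as a Kronecker sum of two tridiagonal Toeplitz matrices, use the additivity of the Kronecker-sum spectrum, and plug in the closed-form tridiagonal Toeplitz eigenvalues (the paper splits the diagonal term as $a_1/2$ on each factor rather than placing all of $a_1$ on one factor, which is immaterial since only the sum of the two diagonals enters each eigenvalue). Your added remarks on justifying the Toeplitz spectrum via diagonal similarity and on checking that \texttt{SAME} convolution yields truncated rather than circulant shifts are sensible elaborations of steps the paper takes for granted.
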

The proof, given in detail in the supplement, is to show that the factors of this product are indentical to the eigenvalues of $\mathbf{G}_{+}$,  which can be done by writing $\mathbf{G}_{+}$ as a Kronecker sum of tridiagonal Toeplitz matrices. Proposition~\ref{proposition-+-filter-determinant} provides a fast method for computing the determinant, which would have complexity $\mathcal{O}(N^3)$ for a general square matrix, or $\mathcal{O}(N^{3/2})$ if based on sparse Cholesky factorization \cite{Isham2004}. In practice, we reparameterize $a_1,\ldots,a_5$ to ensure that all the eigenvalues are real and positive, see details in the supplement. Without this constraint, we have observed unstable, oscillating solutions in some situations, and the constraint also ensures that the bijective assumption is valid.

\subsubsection{Seq-Filters}
For some ordering of the image pixels, the output pixels for a convolution with a seq-filter only depend on previous input pixels. This is equivalent to saying that the corresponding transform matrix $\mathbf{G}_{\text{seq}}$ yields a lower triangular matrix $\mathbf{P}^\top\mathbf{G}_{\text{seq}}\mathbf{P}$ for some permutation matrix $\mathbf{P}$, which implies that $\det(\mathbf{G}_{\text{seq}})=a_1^N$. Seq-filters are therefore extremely cheap, but somewhat restricted compared to $+$-filters, due to the inability to model non-sequential patterns. However, the seq-filter in Eq.~(\ref{eq:filters}) can be rotated/mirrored in eight different ways, each encoding a different pixel ordering.
Thus different sequential dependencies can be modelled in different layers. Also seq-filters can trivially be extended to $5\times5$- or $7\times7$-filters, still without changing the determinant. Connections to auto-regressive models, for example PixelCNN \citep{VandenOord2016}, are discussed in Section~\ref{sec:Related-Work}.

\subsection{Multichannel Construction}\label{subsec:multichannel}
When $C>1$, Eq.~(\ref{eq:layer-transition}) can be written on vector form as
\begin{align*}
\mathbf{z}_{l}&=\mathbf{G}_{l}\mathbf{z}_{l-1}+\mathbf{b}_{l}\\&=\left[\begin{array}{ccc}\mathbf{G}_{l,1,1} & \cdots & \mathbf{G}_{l,1,C}\\\vdots & \ddots & \vdots\\\mathbf{G}_{l,C,1} & \cdots & \mathbf{G}_{l,C,C}\end{array}\right]\mathbf{z}_{l-1}+\left[\begin{array}{c}b_{l,1}\mathbf{1}\\\vdots\\b_{l,C}\mathbf{1}\end{array}\right],
\end{align*}
where $\mathbf{G}_{l,i,j}$ is the transition matrix of a single convolution from input channel $j$ to output channel $i$, and $b_{l,j}$ is the bias of output channel $j$. In order to make $\det(\mathbf{G}_{l})$ computationally tractable, we set $\mathbf{G}_{l,i,j}=\mathbf{0}$ for $i<j$, making $\mathbf{G}_l$ lower block triangular and 
\begin{equation*}
\textstyle
\det\left(\mathbf{G}_{l}\right)=\prod_{c=1}^{C}\det\left(\mathbf{G}_{l,c,c}\right).
\end{equation*}
The ordering of channels could vary between different layers to allow information to flow back and forth between all channels. One could also add invertible $1\times1$ convolutions \citep{Kingma2018a} between layers for this ordering to be dynamic and learnable. A multichannel DGMRF could learn more interesting representations by storing different features in different channels of the hidden layers. Even with singlechannel data, the hidden layers can be multichannel by using a multiscale architecture \citep{Dinh2017}.

\subsection{Non-Linear Extension}\label{sec:non-linear-deep-GMRFs}
The linear DGMRFs can be extended by adding non-linear activation functions between layers of the neural network $\mathbf{g}_{\boldsymbol{\theta}}(\mathbf{x})$. Formally, Eq.~(\ref{eq:layer-transition}) is replaced by
\begin{equation*}
\mathbf{Z}_{l}=\psi_l\left(\text{conv}\left(\mathbf{Z}_{l-1},\mathbf{w}_{l}\right)+\mathbf{b}_{l}\right),
\label{eq:non-linear-layer-transition}
\end{equation*}
where $\psi_{l}$ is a non-linear scalar function that operates element-wise. We restrict $\psi_{l}$ to be strictly increasing, to ensure that $\mathbf{g}_{\boldsymbol{\theta}}(\mathbf{x})$ is a bijection.

The distribution of $\mathbf{x}$ can now be computed by the change of variable rule
\begin{align*}
\begin{split}
\,&\log p(\mathbf{x})=\log p\left(\mathbf{z}\right)+\log\left|\det\left(d\mathbf{z}/d\mathbf{x}\right)\right|\\&=\log p\left(\mathbf{z}\right)+\sum_{l=1}^{L}\log\left|\det\left(\mathbf{G}_{l}\right)\right|+\sum_{l=1}^{L}\sum_{i=1}^{N}\log\left|\psi_{l}^{\prime}\left(h_{l,i}\right)\right|,
\label{eq:z=changeOfVariable}
\end{split}
\end{align*}
where 
$\mathbf{h}_l=\text{vec}(\text{conv}(\mathbf{Z}_{l-1},\mathbf{w}_{l})+\mathbf{b}_{l})$ is the input to $\psi_{l}$ and
$\psi_{l}^{\prime}$ is the derivative. Just as in the linear case, the computational cost of this density is linear.

Per default we assume that $\psi_{l}$ are Parametric Rectified Linear Units (PReLUs), defined by
\begin{equation*}
\psi_{l}\left(h\right)=\begin{cases}
\begin{array}{c}
h,\\
\alpha_{l}h,
\end{array} & \begin{array}{c}
\text{for}\,\,\,h\geq0\\
\text{for}\,\,\,h<0
\end{array}\end{cases},
\end{equation*}
where $\alpha_{l}$ are learnable parameters with $\alpha_{l}>0$.
We can now add $\alpha_1,\ldots,\alpha_L$ to the parameters $\boldsymbol{\theta}$ and optimize them jointly.

\section{Learning and Inference}\label{sec:Inference}
There exist two kinds of unknown variables that we wish to infer: the latent field $\mathbf{x}$ and the model parameters $\boldsymbol{\theta}$. Since we are not interested in the posterior uncertainty of $\boldsymbol{\theta}$ directly, we take a practical course of action and optimize these, using a variational lower bound on the marginal likelihood $p(\mathbf{y}|\boldsymbol{\theta})$. Given the optimal value $\hat{\boldsymbol{\theta}}$, for the linear model, we make a fully Bayesian treatment of the posterior $p(\mathbf{x}|\hat{\boldsymbol{\theta}},\mathbf{y})$.
\subsection{Optimization of Parameters}
For the linear DGMRF, the marginal likelihood can be written on closed form as
\begin{equation*}
p\left(\mathbf{y}|\boldsymbol{\theta}\right)=\left.\frac{p\left(\mathbf{y}|\mathbf{x},\boldsymbol{\theta}\right)p\left(\mathbf{x}|\boldsymbol{\theta}\right)}{p\left(\mathbf{x}|\mathbf{y},\boldsymbol{\theta}\right)}\right|_{\mathbf{x}=\mathbf{x}^{*}}
\end{equation*}
for arbitrary value of $\mathbf{x}^{*}$. Unfortunately, this expression requires the determinant of the posterior precision matrix $\det(\mathbf{G}^{\top}\mathbf{G}+\sigma^{-2}\mathbf{I}_\mathbf{m})$, which is computationally infeasible for large $N$. Instead, we focus on the variational evidence lower bound (ELBO) $\mathcal{L}\left(\boldsymbol{\theta},\boldsymbol{\phi},\mathbf{y}\right)\leq \log p(\mathbf{y}|\boldsymbol{\theta})$ which can be written as
\begin{equation*}
\mathcal{L}\left(\boldsymbol{\theta},\boldsymbol{\phi},\mathbf{y}\right)=\mathbb{E}_{q_{\boldsymbol{\phi}}(\mathbf{x})}\left[-\log q_{\phi}(\mathbf{x})+\log p\left(\mathbf{y},\mathbf{x}|\boldsymbol{\theta}\right)\right],
\end{equation*}
where $q_{\boldsymbol{\phi}}(\mathbf{x})$ is the variational posterior approximation, which depends on variational parameters $\boldsymbol{\phi}$. We here only intend to use $q_{\boldsymbol{\phi}}(\mathbf{x})$ as a means for optimizing $\boldsymbol{\theta}$, and not for example to make posterior predictions. For simplicity, we choose $q_{\boldsymbol{\phi}}(\mathbf{x})=\mathcal{N}(\mathbf{x}|\boldsymbol{\nu},\mathbf{S})$ with diagonal covariance matrix, and $\boldsymbol{\phi}=\{\boldsymbol{\nu},\mathbf{S}\}$. After inserting the variational and model densities and simplifying, we can write the ELBO as
\begin{align}
\begin{split}
\label{eq:ELBO}
&\mathcal{L}\left(\boldsymbol{\theta},\boldsymbol{\phi},\mathbf{y}\right)=\frac{1}{2}\log\left|\det\left(\mathbf{S}_{\boldsymbol{\phi}}\right)\right|-M\log\sigma_{\boldsymbol{\theta}}+\log\left|\det\left(\mathbf{G}_{\boldsymbol{\theta}}\right)\right|\\&-\frac{1}{2}\mathbb{E}_{q_{\boldsymbol{\phi}}(\mathbf{x})}\bigg[\mathbf{g}_{\boldsymbol{\theta}}(\mathbf{x})^{\top}\mathbf{g}_{\boldsymbol{\theta}}(\mathbf{x})
+\frac{1}{\sigma_{\boldsymbol{\theta}}^{2}}\left(\mathbf{y}-\mathbf{x}\right)^{\top}\mathbf{I}_\mathbf{m}\left(\mathbf{y}-\mathbf{x}\right)\bigg],
\end{split}
\end{align}
where constant terms have been omitted, and all parameters have been subscripted with $\boldsymbol{\theta}$ and $\boldsymbol{\phi}$ to clarify whether they are model or variational parameters. Additionally, we use the reparameterization trick \citep{Kingma2013} replacing the last expectation with a sum over $N_q$ standard random samples $\boldsymbol{\varepsilon}_{1},\ldots,\boldsymbol{\varepsilon}_{N_q}$, and set $\mathbf{x}_i=\boldsymbol{\nu}_{\boldsymbol{\phi}}+\mathbf{S}_{\boldsymbol{\phi}}^{1/2}\boldsymbol{\varepsilon}_{i}$ in the sum. This gives an unbiased estimator of the ELBO, which has low variance. Moreover, this estimator is differentiable with respect to $\boldsymbol{\theta}$ and $\boldsymbol{\phi}$ and can be used for stochastic gradient optimization with autodiff and backprop. 
Parameter learning will be fast, with a time complexity that is $\mathcal{O}(N)$ for a fixed number of iterations of optimization, since backprop in a CNN is linear and so are the determinant computations described in Section~\ref{sec:conv-filters}. This can be compared with $\mathcal{O}(N^3)$ for standard GP and $\mathcal{O}(N^{3/2})$ for standard GMRF inference in 2D problems, based on the Cholesky-decomposition. The ELBO can be trivially extended to the non-linear model by replacing $\log|\det(\mathbf{G}_{\boldsymbol{\theta}})|$ in Eq.~(\ref{eq:ELBO}).
\subsection{Exact Inference for the Latent Field}
For a linear DGMRF, the conditional posterior $p(\mathbf{x}|\hat{\boldsymbol{\theta}},\mathbf{y})$ is also a GMRF, see Eq.~(\ref{eq:posteriorGMRF}). Even though computing this density is too costly in general, it is possible to compute the posterior mean and to draw samples from the posterior, which can be used for making predictions. Both require solving linear equation systems $\tilde{\mathbf{Q}}\mathbf{x}=\mathbf{c}$ involving the posterior precision matrix $\tilde{\mathbf{Q}}=\mathbf{G}^{\top}\mathbf{G}+\sigma^{-2}\mathbf{I}_\mathbf{m}$. For this we use the conjugate gradient (CG) method (see e.g. \citealp{barrett1994templates}), which is an iterative method that, rather than exactly computing $\mathbf{x}=\tilde{\mathbf{Q}}^{-1}\mathbf{c}$, iteratively minimizes the relative residual $\Vert\tilde{\mathbf{Q}}\mathbf{x}-\mathbf{c}\Vert/\Vert\mathbf{c}\Vert$ until it falls below some threshold $\delta$, that can be set arbitrarily low. In practice, $\delta=10^{-7}$ gives solutions that visually appear to be identical to the exact solution. CG only requires matrix-vector-multiplications, which means that the multiplications with $\mathbf{G}$ and $\mathbf{G}^\top$ can be implemented in a matrix-free fashion using convolution.

Posterior sampling for $\mathbf{x}$ can be performed using the method of \citet{Papandreou2010} by computing
\begin{equation*}
\mathbf{x}_{s}=\tilde{\mathbf{Q}}^{-1}\left(\mathbf{G}^{\top}\left(\mathbf{u}_{1}-\mathbf{b}\right)+\frac{1}{\sigma^{2}}\left(\mathbf{y}+\sigma\mathbf{I}_\mathbf{m}\mathbf{u}_{2}\right)\right),
\end{equation*}
where $\mathbf{u}_1$ and $\mathbf{u}_2$ are standard Gaussian random vectors. It can be easily verified that $\mathbf{x}_{s}$ is Gaussian, with the same mean and covariance as the posterior.

 \begin{algorithm}[tb]
   \caption{Inference algorithm}
   \label{alg:inference}
\begin{algorithmic}
   \STATE {\bfseries Input:} data $\mathbf{y}$, model structure, learning rate, $N_q$, etc.
   \vspace{1mm}
   \STATE Initialize values for the model parameters\newline $\boldsymbol{\theta}\subseteq\{(\mathbf{w}_{l},\mathbf{b}_{l},\alpha_{l})_{l=1,\ldots,L},\sigma^2\}$ and the variational parameters $\boldsymbol{\phi}=\{\boldsymbol{\nu},\mathbf{S}\}$
   \vspace{1mm}
   \STATE Optimize the ELBO (Eq.~\ref{eq:ELBO}) wrt. $\boldsymbol{\theta}$ and $\boldsymbol{\phi}$ using Adam to obtain $\hat{\boldsymbol{\theta}}$ and $\hat{\boldsymbol{\phi}}$ 
   \vspace{1mm}
   \IF{the model is linear}
   \STATE Compute the mean and marginal variances of the posterior $p(\mathbf{x}|\mathbf{y},\hat{\boldsymbol{\theta})}$ using CG and simple RBMC
   \ELSE
   \STATE Approximate the mean and marginal variances of the posterior $p(\mathbf{x}|\mathbf{y},\hat{\boldsymbol{\theta}})$ with $\hat{\boldsymbol{\nu}}$ and $\text{diag}(\hat{\mathbf{S}})$
   \ENDIF
   \vspace{1mm}
   \STATE Compute the predictive means $\mathbb{E}(y_{i}^{*}|\mathbf{y},\hat{\boldsymbol{\theta}})=\mathbb{E}(x_{i}|\mathbf{y},\hat{\boldsymbol{\theta}})$ and variances $\text{Var}(y_{i}^{*}|\mathbf{y},\hat{\boldsymbol{\theta}})=\text{Var}(x_{i}|\mathbf{y},\hat{\boldsymbol{\theta}})+\sigma^{2}$
\end{algorithmic}
\end{algorithm}
Given a number of posterior samples, the posterior marginal variances $\text{Var}(x_{i}|\mathbf{y},\hat{\boldsymbol{\theta}})$ can be naively approximated using Monte Carlo estimation, but more efficiently approximated using the simple Rao-Blackwellized Monte Carlo (simple RBMC) method by \citet{Siden2018}. This provides a way to compute the posterior predictive uncertainty as
\begin{equation*}
    \text{Var}(y_{i}^{*}|\mathbf{y},\hat{\boldsymbol{\theta}})=\text{Var}(x_{i}|\mathbf{y},\hat{\boldsymbol{\theta}})+\sigma^{2}.
\end{equation*}
The time complexity is here mainly decided by the complexity of the CG method, which can be described as $\mathcal{O}(N\sqrt{\kappa})$, where $\kappa$ is the condition number of $\tilde{\mathbf{Q}}$ \citep{Shewchuk1994}. It is difficult to say how $\kappa$ depends on $N$ for a general $\tilde{\mathbf{Q}}$, but for a two-dimensional, second-order elliptic boundary value problems, for example the SPDE approach with $\gamma=1$, $\kappa$ is linear in $N$ so the method is $\mathcal{O}(N^{3/2})$. Even though this is the same as for the sparse Cholesky method, we have observed CG to be a lot faster in practice. Also, the storage requirements for CG is $\mathcal{O}(N)$, versus $\mathcal{O}(N \log N)$ for Cholesky. 

A drawback with non-linear DGMRFs is that they do not give a simple posterior $p(\mathbf{x}|\hat{\boldsymbol{\theta}},\mathbf{y})$. We could use the variational approximation $q_{\boldsymbol{\phi}}(\mathbf{x})$ instead, but the proposed Gaussian independent variational family is probably too simple to approximate the true posterior in a satisfying way. We discuss possible solutions to this limitation in Section~\ref{sec:Conclusions}.

The inference algorithm is summarized in Algorithm~\ref{alg:inference}.
\vfill

\section{Related Work}\label{sec:Related-Work}
\subsection{GMRFs}
GMRFs have a long history in spatial and image analysis (see e.g. \citealp{Woods1972}; \citealp{Besag1974} and \citealp{Isham2004}). The mentioned SPDE approach has been extended in numerous ways leading to flexible GMRF models, with for example non-stationarity  \citep{FuglstadLindgren2015}, oscillation \citep{Lindgren2011} and non-Gaussianity \citep{bolin2014spatial}. A nested (deep) SPDE is introduced by \citet{Bolin2011}, which results in a model similar to ours, but which lacks the connection to CNNs and some of the computational advantages that we provide.

\citet{Papandreou2010} use GMRFs for inpainting in natural images, in combination with CG to obtain posterior samples. They do not learn the form of the prior precision matrix, but instead rewrite the model as a product of Gaussian experts, and learn the mean and variance of the Gaussian factors. \citet{Papandreou2011} assume a heavy-tailed non-Gaussian prior for the latent pixel values, and use a variational GMRF approximation for the posterior.

\subsection{GPs}
GPs \citep{stein1999,williams2006gaussian} are commonplace for modelling spatially dependent data, not necessarily restricted to observations on the grid. 
The GP covariance kernel, which encodes the spatial dependencies, can be made flexible \citep{Wilson2013} and deep \citep{Dunlop2018,roininen2019hyperpriors}. However, standard GPs are limited by $\mathcal{O}(N^3)$ computational complexity (assuming the number of measurements $M\sim\mathcal{O}(N)$).

Inducing point methods \citep{quinonero2005unifying,snelson2006sparse} 
can reduce the complexity to $\mathcal{O}(P^2N+P^3)$ or even $\mathcal{O}(P^3)$ \citep{Hensman2013}, where $P$ is the number of inducing points. However, 
for grid data, these methods tend to over-smooth the data \citep{wilson2015kernel} unless $P$ is chosen in the same order of magnitude as $N$. When the data are on a regular grid and fully observed, 
Kronecker and Toeplitz methods can be used for fast computation (e.g. \citealp{Saatci2011}; \citealp{wilson2014covariance}). Under certain assumptions about the interactions across the input dimensions, such as additivity and separability, this can reduce the complexity to $\mathcal{O}(N\log N)$. These methods can also be extended to when the data are not on the grid \citep{wilson2015kernel}, or when the grid is incomplete \citep{Stroud2016}.

\subsection{Deep Generative Models}\label{sec:DGMs}
Deep generative models 
are normally trained on large datasets, and can then be used to generate new samples from the data distribution, or to explore learned latent representations. For example, generative adversarial networks (GANs, \citealp{goodfellow2014}) have been used for inpainting problems in natural images (see e.g. \citealp{yu2018}; \citealp{liu2018}) with impressive results. Our model is mainly targeted at smaller datasets, as single images, where too complex models are likely to overfit. Instead, we achieve an infinite receptive field through the inverse definition $\mathbf{z} = \mathbf{g}(\mathbf{x})$, which require few parameters. Also, linear DGMRFs
can readily provide uncertainty estimates, which is prohibitive for most deep generative models. Overall, deep generative models are difficult to use for typical spatial problems, but these models share ideas with our proposed model as we discuss below. 

Flow-based or invertible generative models, e.g., NICE \citep{dinh2014nice}, 
normalizing flows \citep{rezende15}, IAF \citep{kingma2016improved}, MAF \citep{papamakarios2017masked}, real NVP \citep{Dinh2017}, Glow \citep{Kingma2018a}
and i-ResNet \citep{pmlr-v97-behrmann19a}, model $\mathbf{x}$ as a bijective function of a latent variable from a simple base distribution,
just as our model. 
The likelihood can be optimized directly for parameter learning, but generally require all pixels to be non-missing, that is, these methods are not designed to handle the situation with incomplete or noisy observations that we are considering. One connection to our work is that a linear DGMRF with seq-filters can be seen as a fully linear MAF with masked convolutional MADE-layers. \citet{dinh2014nice} use a learned model for inpainting with projected gradient ascent, which gives the posterior mode but no uncertainty quantification.
\citet{Lu2019} provide a conditional version of Glow for inpainting, but this requires missing pixels to be known during training, and the predictive distribution is modeled directly and not through a Bayesian inversion of the model.

Auto-regressive models, e.g. PixelRNN \citep{VandenOord2016a}, PixelCNN \citep{VandenOord2016} and PixelCNN++ \citep{Salimans2017}, model the pixel values sequentially
similarly to our proposed seq-filters, but there are some major differences. The seq-filters use masked convolutions to obtain cheap determinant computations 
and the ordering of pixels can change between layers, whereas auto-regressive models have no latent variables and are trained using the same image as input and output.
\citep{VandenOord2016a} consider image completion, but are limited to the e.g. the case where the bottom half of the image is missing, and can't handle a general mask.

Variational autoencoders \citep{Kingma2013} differ from our model in that they use the deep transform of the latent variables for modeling the parameters of the distribution of the data, rather than for the data directly. This makes the recovery of the latent variables intractable, which is not the case for our model, however, we still use the same variational optimization algorithm for speedups.

Deep image prior (DIP, \citealp{ulyanov2018deep}) uses CNNs, trained on single images, for denoising and inpainting, with promising results. However, this is not truly a probabilistic generative model, since the latent variables are not inferred, but fixed to some initial random values. Thus, it would be difficult to output predictive uncertainty from such a model.
\begin{figure*}[ht]
\begin{center}
\includegraphics[width=175mm,trim={2mm 2mm 0 1mm},clip]{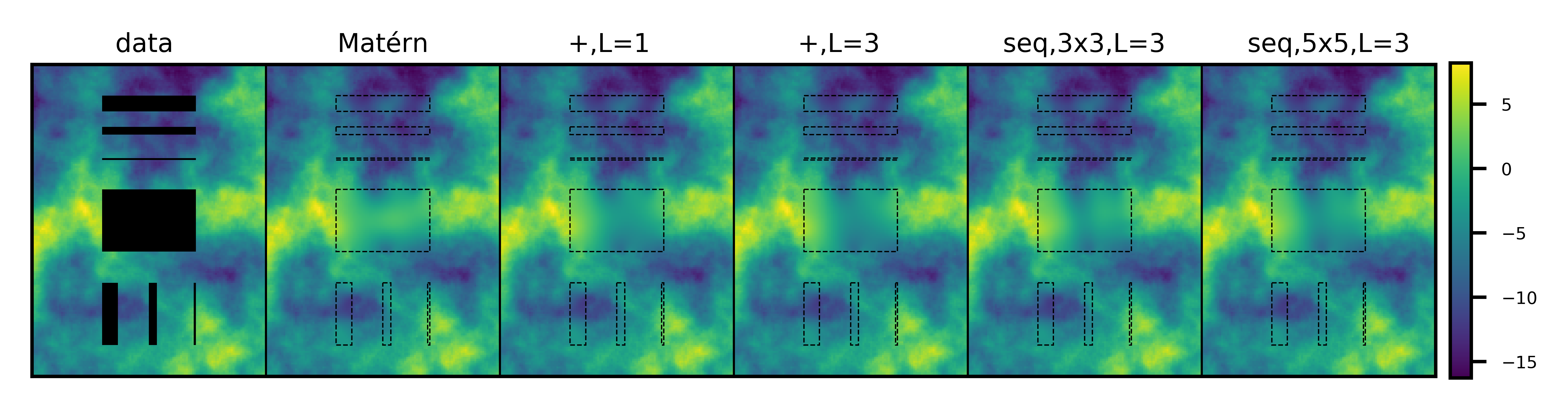}
\includegraphics[width=175mm,trim={2mm 2mm 0 2mm},clip]{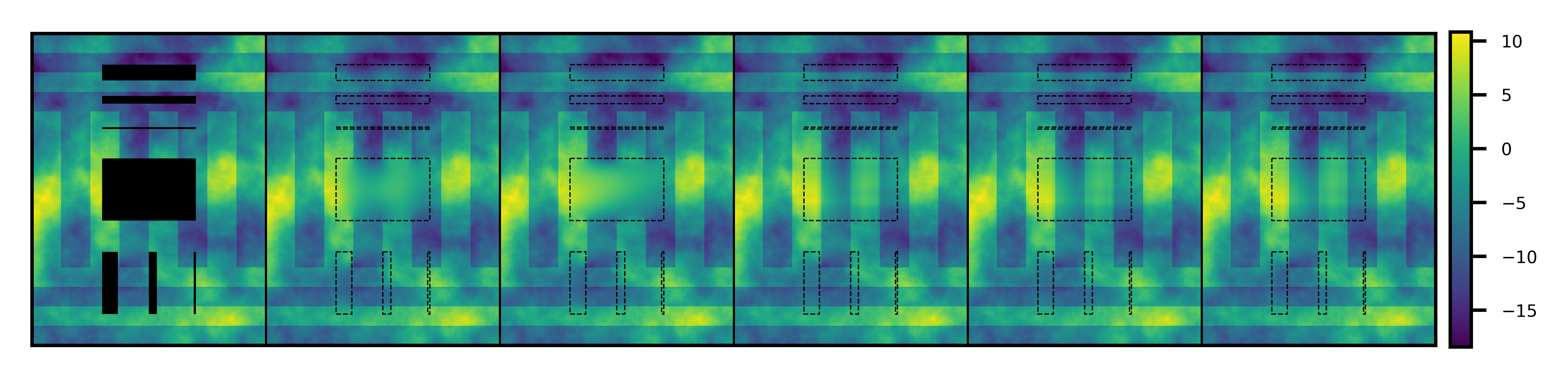}
\vspace{-0.5cm}
\caption{Posterior mean for inpainting the 160$\times$120 pixel toy data without edges (top) and with edges (bottom). The second column shows inpainting by the same Matérn GMRF model and hyperparameters that were used to generate the data without edges.}
\label{fig:toy-data}
\end{center}
\end{figure*}
\section{Experiments}\label{sec:Results}
We have implemented DGMRF
in TensorFlow \citep{abadi2016tensorflow}, taking advantage of autodiff and GPU computations. We train the parameters, and compute the predictive distribution using the CG algorithm. To avoid boundary effects, the images are extended with a 10-pixel wide frame of missing values at all sides. Additional details about the implementation can be found in the supplement. Code for our methods and experiments are available at \url{https://bitbucket.org/psiden/deepgmrf}.
\subsection{Toy Data}
We demonstrate the behaviour of our model for an inpainting problem on the two toy datasets in Figure~\ref{fig:toy-data}, which have size 160$\times$120 pixels. The data in the first row are generated as a random sample from the Matérn GMRF in Eq.~(\ref{eq:discreteSPDE}) with $\gamma=1$, $\tau=1$, and $\kappa^2=8/50^2$ corresponding to a spatial correlation range of 50 pixels. The data in the second row are the same, but we have added horizontal and vertical edges, to investigate the model's ability to capture such patterns. Column 2 shows the posterior mean for $\mathbf{x}$ when the Matérn model with the same hyperparameters is used for prediction, which gives optimal results for the first row, but which over-smooths the edges in the second row. The corresponding results for different instances of the linear DGMRF, are shown in column 3-6. They all perform well in the simple case without edges, in which the $+$-filter models contain the true model according to Proposition~\ref{proposition-deep-GMRF-Matern}. In the case with edges, the model with depth $L=1$ which corresponds to a standard GMRF is too simple to handle the more complex structure in the data, but for $L=3$, all the models give reasonable results, that preserve the edges.

Figure~\ref{fig:intermediate-layers} displays the learned $+$-filters of the 3-layer model, and the values of valid pixels of the hidden layers, when original data, including missing pixels, are used as input. The first two filters learns differentiation in the vertical and horizontal direction, and the third filter is close to an identity function. Most spatial structure is removed in layer 3, that is assumed to be standard normal by the model.
\begin{figure}[ht]
\begin{center}
\includegraphics[width=60mm]{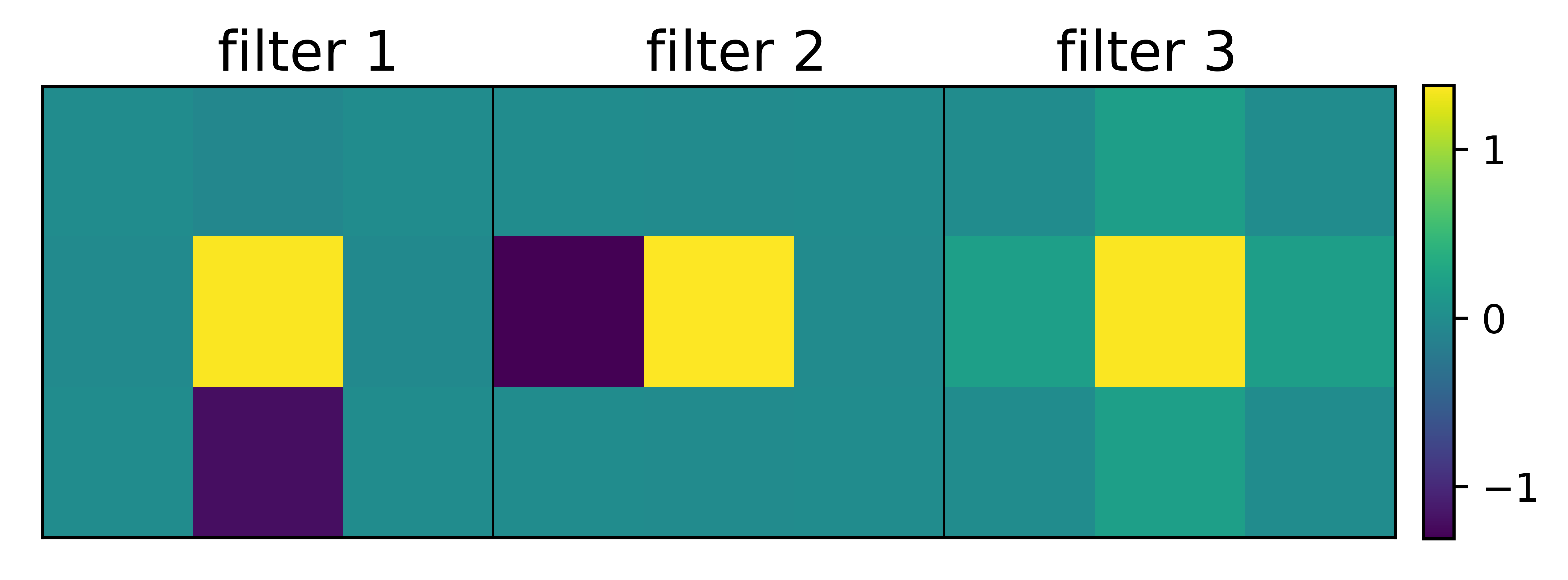}
\includegraphics[width=85mm]{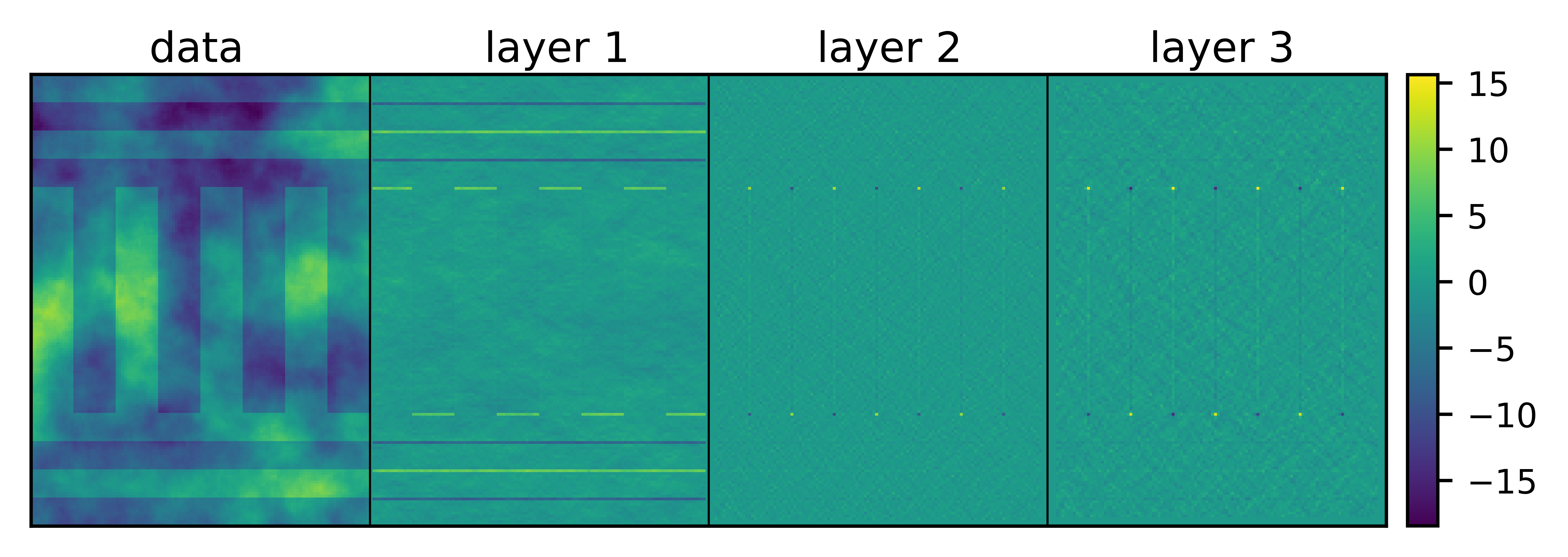}
\vspace{-.5cm}
\caption{A linear deep GMRF with $3$ layers of learned $+$-filters for the toy data with edges. The filters collaborate to remove spatial structures to the final layer.}\label{fig:intermediate-layers}
\vspace{-.4cm}
\end{center}
\end{figure}
\subsection{Satellite Data}
We compare our method against some popular methods for large data sets in spatial statistics, by considering the satellite data of daytime land surface temperatures, used in the competition by \citet{Heaton2018}. The data are on a 500$\times$300 grid, with 105,569 non-missing observations as training set. The test set consists of 42,740 observations and have been selected as the pixels that were missing due to cloud cover on a different date. The dataset and the missing pixels are shown in the supplement, together with the posterior output from our model. The data and code for some of the methods can be found at \url{https://github.com/finnlindgren/heatoncomparison}. The participants of the competition were instructed to use exponential correlation if applicable, and to add a constant mean and linear effects of the latitude and longitude. For this reason, we extend the measurement equation to include linear trends, so that
\begin{equation}\label{eq:linear-trend}
    y_{i}|x_{i}\sim\mathcal{N}\left(y_{i}|x_{i}+\mathbf{F}_{i,\cdot}\boldsymbol{\beta},\sigma^{2}\right),
\end{equation}
where $\mathbf{F}$ is a spatial covariate matrix with columns corresponding to (constant, longitude, latitude), and $\boldsymbol{\beta}$ is a 3-dimensional regression coefficient vector, which can be integrated out jointly with $\mathbf{x}$ for the predictions, see details in the supplement.

Table~\ref{satellite-table} compares different instances of our model with the methods in the competition, which are briefly described in the supplement, and in more detail in \citet{Heaton2018}. One of the top competitors, SPDE, is essentially the same method as described in Section~\ref{sec:SPDE}. For comparison with another deep method, we have also included results for DIP using the authors own implementation\footnote{https://github.com/DmitryUlyanov/deep-image-prior}. For DIP, removal of linear trends and normalization to $[0,1]$ were done in preprocessing, and these steps were inverted before evaluation. As DIP does not give uncertainty estimates, we also compare with an ensemble of 10 DIP models, using the ensemble mean and standard deviation as predictive distribution. The scores used are mean absolute error (MAE), root-mean-squared-error (RMSE), mean continuous rank probability score (CRPS), mean interval score (INT), and prediction interval coverage (CVG). CRPS and INT are proper scoring rules, that also account for the predictive uncertainty \citep{gneiting2007strictly}. These are the same scores that were used in the competition. 

\begin{table}[ht]
\caption{Prediction scores on the satellite data. The scores of the methods in the upper pane come from Table 3 in \citet{Heaton2018}. Our models are presented in the lower pane. Lower scores are better, except from CVG, for which 0.95 is optimal. The results for our models are averages computed across five random seeds. Standard deviations across seeds are shown in parenthesis for $\text{seq}_{5\times5,L=5}$, and for the other models in the supplement.}
\abovespace
\label{satellite-table}
\centering
\small
\begin{tabular}{rlllll}
\toprule
Method & MAE & RMSE & CRPS & INT & CVG \\
\midrule
  FRK & 1.96 & 2.44 & 1.44 & 14.08 & 0.79 \\ 
  Gapfill & 1.33 & 1.86 & 1.17 & 34.78 & 0.36 \\ 
  LatticeKrig & 1.22 & 1.68 & 0.87 & 7.55 & 0.96 \\ 
  LAGP & 1.65 & 2.08 & 1.17 & 10.81 & 0.83 \\ 
  MetaKriging & 2.08 & 2.50 & 1.44 & 10.77 & 0.89 \\ 
  MRA & 1.33 & 1.85 & 0.94 & 8.00 & 0.92 \\ 
  NNGP & 1.21 & 1.64 & 0.85 & 7.57 & \textbf{0.95} \\ 
  Partition & 1.41 & 1.80 & 1.02 & 10.49 & 0.86 \\ 
  Pred. Proc. & 2.15 & 2.64 & 1.55 & 15.51 & 0.83 \\ 
  SPDE & 1.10 & 1.53 & 0.83 & 8.85 & 0.97 \\ 
  Tapering & 1.87 & 2.45 & 1.32 & 10.31 & 0.93 \\ 
  Peri. Embe. & 1.29 & 1.79 & 0.91 & 7.44 & 0.93 \\ 
\midrule
  DIP & 1.53 & 2.06 & - & - & - \\
  DIP ensemble & 1.30 & 1.67 & 0.96 & 11.82 & 0.72 \\
\midrule
  DGMRF (our) & & & & & \\ 
  $\text{seq}_{5\times5,L=1}$ & 1.06 & 1.42 & 0.76 & 7.21 & 0.97 \\ 
  $\text{seq}_{5\times5,L=3}$ & 0.95 & 1.3 & 0.75 & 8.29 & 0.97 \\ 
  $\text{seq}_{5\times5,L=5}$ & \textbf{0.93} & \textbf{1.25} & \textbf{0.74} & 8.14 & 0.97 \\ 
   & (.037) & (.051) & (.012) & (.461) & (.001) \\ 
  $\text{seq}_{3\times3,L=5}$ & 1.16 & 1.57 & 0.81 & \textbf{6.98} & 0.97 \\
  $+_{L=5}$ & 1.09 & 1.47 & 0.78 & 7.63 & 0.97 \\ 
  $\text{seq}_{5\times5,L=5,\text{NL}}$ & 1.37 & 1.87 & - & - & - \\
\bottomrule
\end{tabular}
\vspace{-.4cm}
\end{table}
For the DGMRFs, based on the first three scores, the seq-filters of size 5$\times$5 perform better compared to those of size 3$\times$3 and the $+$-filters, which may be due to the increased flexibility of larger filters. Moreover, deeper models tend to give better results. For the non-linear (NL) model, CG cannot be used to compute posterior mean and uncertainty, and the first two scores are instead computed based on the mean of the variational approximation. We note that the NL model performs worse than the linear. Our primary explanation for this disappointing result is that the variational posterior is insufficient in approximating the true posterior. An independent distribution naturally has limited ability to approximate a spatially dependent posterior, and empirically we have also seen that the variational approximation for the linear model performs much worse than the true posterior (results not shown). DGMRF outperforms all the methods from the competition on all criteria except CVG, where it is slightly worse than NNGP. In terms of MAE, RMSE and CRPS the improvement is substantial. It is difficult to compare computing times due to different hardware, but our method takes roughly 2.5h for the $\text{seq}_{5\times5,L=5}$ model using a Tesla K40 GPU, out of which roughly 95\% of the time is for parameter learning.

\section{Conclusions and Future Work}\label{sec:Conclusions}
We have proposed deep GMRFs which enable us to view (high-order) GMRF models as CNNs. We have focused our attention on lattice-based graphs to clearly show the connection to conventional CNNs, however, the proposed method can likely be generalized to arbitrary graphs via graph convolutions (see, e.g., \citealp{xu2018how}), and to continuously referenced data, similar to \citet{Bolin2011}. We have also primarily considered \emph{linear} CNNs, resulting in a model that is indeed equivalent to a GMRF. The DGMRF nevertheless has favorable properties compared to conventional algorithms for GMRFs: the CNN architecture opens up for simple and efficient learning, even though the corresponding graph will have a high connectivity when using multiple layers of the CNN. Empirically we have found that multi-layer architectures are indeed very useful even in the linear case, enabling the model to capture complex data dependencies such as distinct edges, and result in state-of-the-art performance on a spatial benchmark problem.

Using a CNN for the inverse mapping when defining the DGMRF results in a spatial AR model. This gives rise to an infinite receptive field, but can also result in instability of the learned prior. We have constrained the filter parameters to yield real eigenvalues, and under this constraint we have not seen any issues with instability. A more principled way of addressing this potential issue is left for future work.

The CNN-based interpretation offers a natural extension, namely to introduce non-linearities to enable modeling of more complex distributions for the prior $p(\mathbf{x})$. A related extension would be to replace $p(\mathbf{x})$ with one of the flow-based or auto-regressive generative models mentioned in Section~\ref{sec:DGMs}, but appropriately restricted to avoid overfitting on a single image. Further exploring these extensions requires future work. In particular we believe that the independent variational approximation is insufficient to accurately approximate the posterior distribution over latent variables in non-linear DGMRFs. 
One way to address this limitation is to parameterize the square-root of the covariance matrix $\mathbf{S}_{\boldsymbol{\phi}}^{1/2}$ of the variational approximation $q_{\boldsymbol{\phi}}$ directly as a lower triangular matrix. Another approach is to model $q_{\boldsymbol{\phi}}$ as a GMRF with the same graph structure as the original model.





\bibliographystyle{icml2020}
\bibliography{paper}
\newpage
\twocolumn[
\section*{\centering\Large Supplementary material}
\vspace{5mm}
]
\section*{Proof of Proposition~\ref{proposition-+-filter-determinant}}
The transform matrix $\mathbf{G}_{+}$ can be written as
\begin{equation*}
\mathbf{G}_{+}=\mathbf{T}_{1}\oplus\mathbf{T}_{2}=\mathbf{T}_{1}\otimes\mathbf{I}_{H}+\mathbf{I}_{W}\otimes\mathbf{T}_{2},
\end{equation*}
where $\oplus$ denotes the Kronecker sum and $\otimes$ the Kronecker product, $\mathbf{T}_{1}$ is a $W\times W$ tridiagonal Toeplitz matrix, denoted $\mathbf{T}_{1}=\left(W;a_{2},a_{1}/2,a_{4}\right)$, meaning that
\begin{equation*}
    \mathbf{T}_{1}=\left[\begin{array}{ccccc}
a_{1}/2 & a_{4} &  &  & 0\\
a_{2} & a_{1}/2 & a_{4}\\
 & a_{2} & \ddots & \ddots\\
 &  & \ddots &  & a_{4}\\
0 &  &  & a_{2} & a_{1}/2
\end{array}\right],
\end{equation*}
and similarly $\mathbf{T}_{2}=\left(H;a_{3},a_{1}/2,a_{5}\right)$. For the eigenvalues of a Kronecker sum, it holds that if $\lambda_1$ is an eigenvalue of $\mathbf{T}_{1}$ and $\lambda_2$ is an eigenvalue of $\mathbf{T}_{2}$, then $\lambda_1+\lambda_2$ is an eigenvalue of $\mathbf{T}_{1}\oplus\mathbf{T}_{2}$ \citeS{graham1981kronecker}. Moreover, the eigenvalues of a tridiagonal Toeplitz matrix $\mathbf{T}=\left(n;b,a,c\right)$ have a simple formula 
\begin{equation*}
    \lambda_{i}\left(\mathbf{T}\right)=a+2\sqrt{bc}\cos\left(\pi\frac{i}{n+1}\right),\,\,\,\,\,\,\text{for}\,\,i=1,\ldots,n,
\end{equation*}
which holds for real and complex $a$, $b$ and $c$ \citeS{smith1985numerical}. Substituting this formula into the expression
\begin{equation*}
    \det\left(\mathbf{G}_{+}\right)=\prod_{i=1}^{H}\prod_{j=1}^{W}\left(\lambda_{i}\left(\mathbf{T}_{2}\right)+\lambda_{j}\left(\mathbf{T}_{1}\right)\right)
\end{equation*}
gives the result in Proposition~\ref{proposition-+-filter-determinant}.
\section*{$+$-Filter Reparameterization}
The following reparameterization is used to ensure that $\mathbf{G}_{+}$ has real positive eigenvalues
\begin{align*}
  a_{1}&=\text{softplus}\left(\rho_{1}\right)+\text{softplus}\left(\rho_{2}\right)\\a_{2}a_{4}&=\left(\text{softplus}\left(\rho_{1}\right)\tanh\left(\rho_{3}\right)/2\right)^{2},\,\,\,\,\,a_{4}/a_{2}=\exp\left(\rho_{4}\right)\\a_{3}a_{5}&=\left(\text{softplus}\left(\rho_{2}\right)\tanh\left(\rho_{5}\right)/2\right)^{2},\,\,\,\,\,a_{5}/a_{3}=\exp\left(\rho_{6}\right),
\end{align*}
where $\rho_1,\ldots,\rho_6$ are real numbers.
\section*{Implementation Details}
 The model parameters $\boldsymbol{\theta}$ and variational parameters $\boldsymbol{\phi}$ are trained with respect to the negative ELBO (Eq.~(\ref{eq:ELBO})) divided by $N$ as loss function, using Adam optimization \citeS{kingma2014adam} with default settings, learning rate $0.01$ and $100$k iterations. The parameters with the lowest loss value are then saved and conditioned on by our implementation of the CG algorithm, for computing the posterior mean and standard deviation of $\mathbf{x}$. We use $N_q=10$ samples from variational approximation to compute the expectation in each iteration. We can train the measurement error $\sigma$ together with the other parameters $\boldsymbol{\theta}$, but we have used a fixed $\sigma=0.001$, which seems to give very similar results, but with somewhat faster convergence. For the DGMRFs with seq-filters, we randomly select among the eight possible orientations of the filters in each layer. As the toy data is centered around 0, the bias in each layer was fixed to 0 for this experiment. The satellite data was normalized to have maximum pixel value 1.
 
\section*{Competing Methods}
We here briefly describe the methods that are compared against in Table~\ref{satellite-table}, except DIP that is mentioned in Section~\ref{sec:DGMs}. For more details we refer to \citet{Heaton2018}.\newline
\textbf{FRK} (Fixed rank kriging) \citeS{zammit2017frk} approximates a spatial process using a linear combination of $K$ spatial basis functions with $K\ll N$.\newline
\textbf{Gapfill} \citeS{gerber2018predicting} is an algorithmic, distribution free method that makes predictions using sorting and quantile regression based on closeby pixels.\newline
\textbf{LatticeKrig} \citeS{nychka2015multiresolution} approximates a GP with a linear combination of multi-resolution basis function with weights that follow a certain GMRF.\newline
\textbf{LAGP} (Local approximate Gaussian process) \citeS{gramacy2015local} fits a GP, but only uses the subset of points in the training data that are closest to the points in the test data.\newline
\textbf{MetaKriging} \citeS{guhaniyogi2017divide} is an approximate Bayesian method that splits the training data into subsets, fits one model to each subset, and combines them all into a meta-posterior, here using GPs.\newline
\textbf{MRA} (Multi-resolution approximation) \citeS{katzfuss2017multi} uses a multi-resolution approximation of a GP, similar to LatticeKrig, but uses compactly supported basis functions.\newline
\textbf{NNGP} (Nearest-neighbor Gaussian process) \citeS{datta2016hierarchical} approximates a GP by rewriting the joint density of the data points as a product of conditional densities, and truncating the conditioning sets to only contain the nearest neighbors.\newline
\textbf{Partition} makes a spatial partitioning (splits the domain into disjoint subsets) and fits spatial basis functions to each partition, similar to FRK, but with some parameters shared between partitions.\newline
\textbf{Pred. Proc.} (Predictive processes) \citeS{finley2009improving} approximates a GP using a set of $K$ knot locations, also known as inducing points, with $K\ll N$ which reduces the size of the covariance matrix that needs to be inverted.\newline
\textbf{SPDE} (Stochastic partial differential equation) \citep{Lindgren2011} represents a GP with a GMRF (see Section~\ref{sec:SPDE}).\newline
\textbf{Tapering} \citeS{furrer2006covariance} obtains an approximation of a GP with sparse covariance matrix by truncating small covariances in a way that preserves positive definiteness.\newline
\textbf{Peri. Embe.} (Periodic embedding) \cite{guinness2017circulant} approximates a GP using the fast Fourier transform on a regular grid.
 
\section*{Linear Trend Model}
For inference with the linear trend model in Eq.~(\ref{eq:linear-trend}), we extend the vector of latents $\mathbf{x}$ to include also the regression coefficients $\boldsymbol{\beta}$, and use (for linear DGMRFs) the prior
\begin{equation*}
\left[\begin{array}{c}
\mathbf{z}\\
\mathbf{z}^{\prime}
\end{array}\right]=\left[\begin{array}{cc}
\mathbf{G} & \mathbf{0}\\
\mathbf{0} & v\mathbf{I}
\end{array}\right]\left[\begin{array}{c}
\mathbf{x}\\
\boldsymbol{\beta}
\end{array}\right]\Leftrightarrow\bar{\mathbf{z}}=\bar{\mathbf{G}}\bar{\mathbf{x}},~~~~~\bar{\mathbf{z}}\sim\mathcal{N}(\mathbf{0},\mathbf{I}),
\end{equation*}
where $v$ can be interpreted as the prior inverse standard deviation of the elements of $\boldsymbol{\beta}$, which we fix at $v=0.0001$. The posterior for $\bar{\mathbf{x}}$ is a GMRF, similar to Eq.~(\ref{eq:posteriorGMRF}), with
\begin{align*}
    \tilde{\mathbf{Q}}&=\bar{\mathbf{G}}^{\top}\bar{\mathbf{G}}+\frac{1}{\sigma^{2}}\left[\begin{array}{c}
\mathbf{I}\\
\mathbf{F}^{\top}
\end{array}\right]\mathbf{I}_{\mathbf{m}}\left[\begin{array}{cc}
\mathbf{I} & \mathbf{F}\end{array}\right],\\ \tilde{\boldsymbol{\mu}}&=\tilde{\mathbf{Q}}^{-1}\left(-\bar{\mathbf{G}}^{\top}\left[\begin{array}{c}
\mathbf{b}\\
\mathbf{0}
\end{array}\right]+\frac{1}{\sigma^{2}}\left[\begin{array}{c}
\mathbf{I}\\
\mathbf{F}^{\top}
\end{array}\right]\mathbf{y}\right),
\end{align*}
and thus we can proceed with inference as before, with $\bar{\mathbf{x}}$ instead of $\mathbf{x}$, with slight modifications to the ELBO and to the CG method. We use an independent variational approximation $q_{\boldsymbol{\phi}_{\boldsymbol{\beta}}}(\boldsymbol{\beta})=\mathcal{N}(\boldsymbol{\beta}|\boldsymbol{\nu}_{\boldsymbol{\beta}},\mathbf{S}_{\boldsymbol{\beta}})$ for $\boldsymbol{\beta}$. Integrating out $\boldsymbol{\beta}$ is important for the predictive performance. For reference, if linear trends are instead removed using the ordinary least squares estimates of $\boldsymbol{\beta}$ in a preprocessing step, the row in Table~\ref{satellite-table} corresponding to $\text{seq}_{5\times5,L=5}$ instead reads (1.25, 1.74, 0.90, 8.45, 0.89). When the linear trend model is used, we compute posterior standard deviations using standard Monte Carlo estimates, instead of simple RBMC, using $N_s=100$ samples.
\begin{figure*}[ht]
\begin{center}
\includegraphics[width=80mm,trim={0mm 2mm 0 1mm},clip]{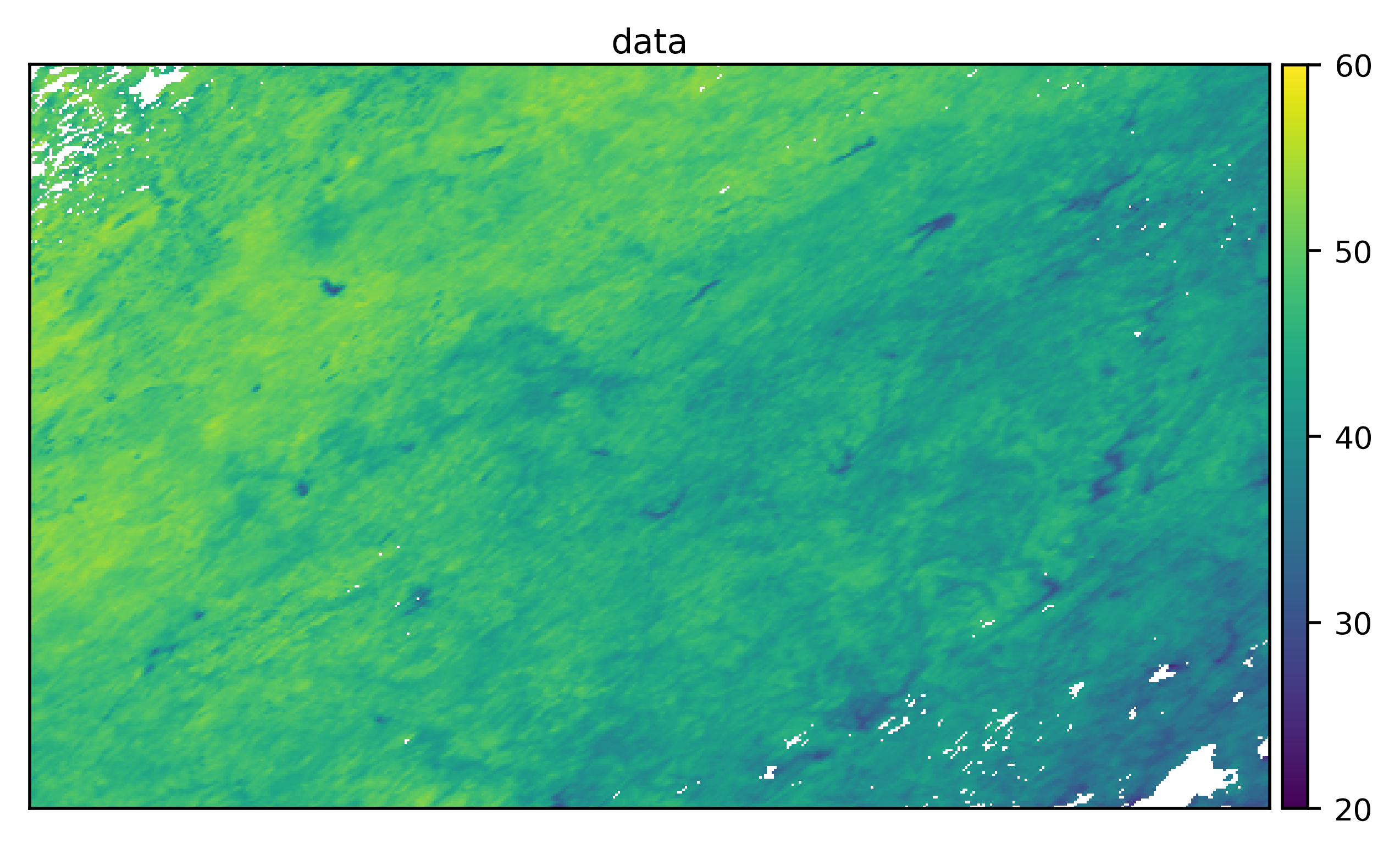}
\includegraphics[width=80mm,trim={0mm 2mm 0 1mm},clip]{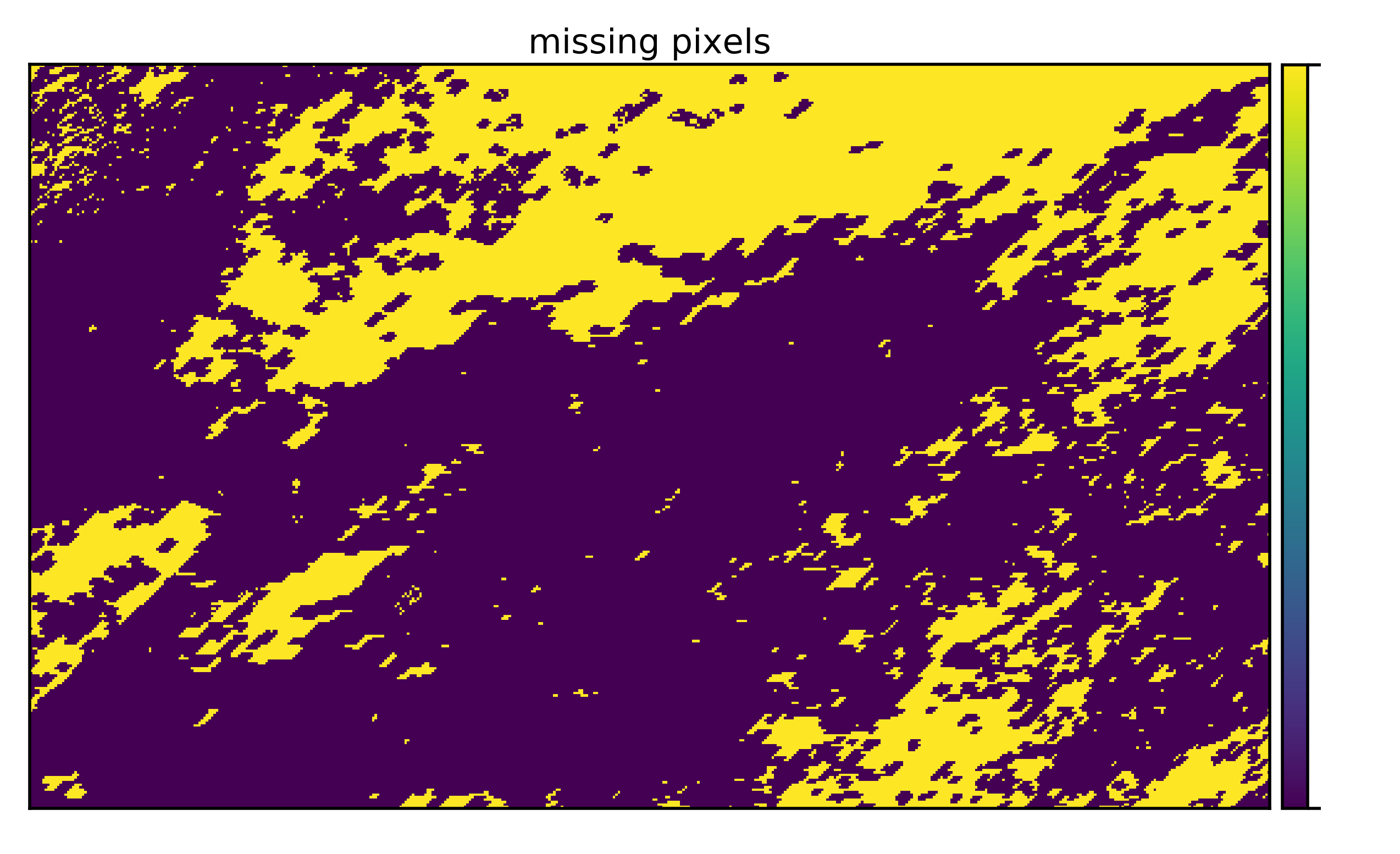}
\includegraphics[width=80mm,trim={0 2mm 0 1mm},clip]{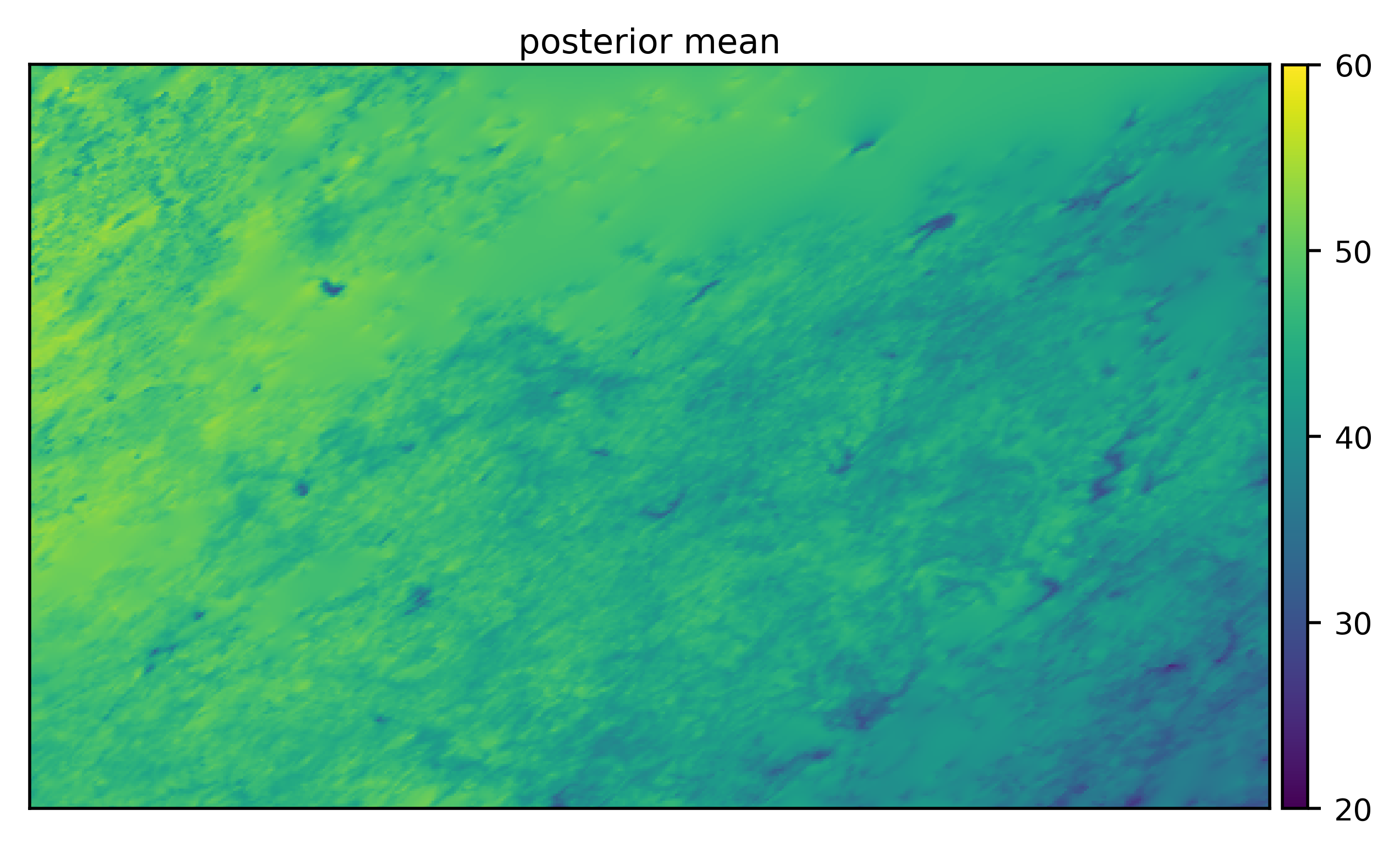}
\includegraphics[width=80mm,trim={0 2mm 0 1mm},clip]{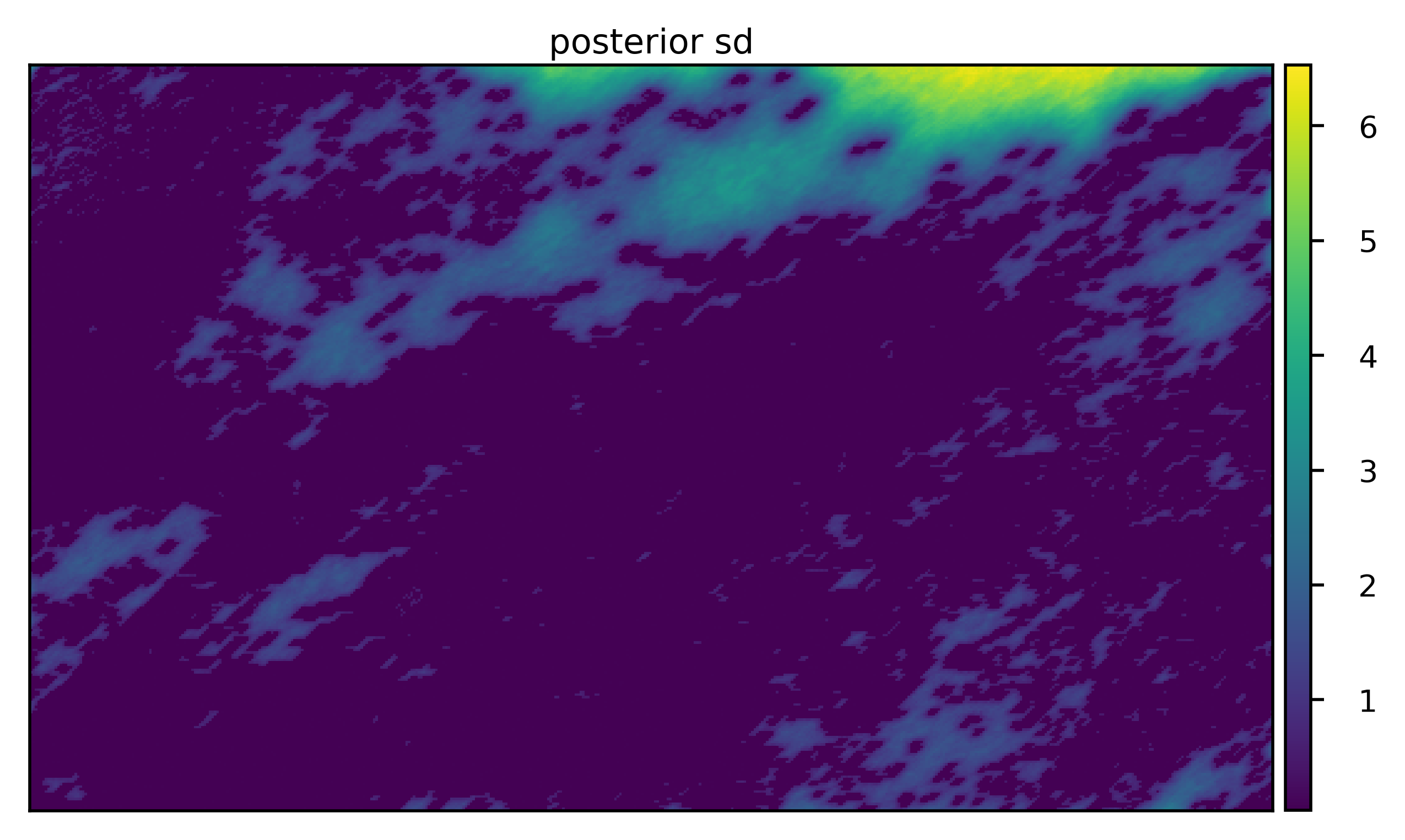}
\includegraphics[width=80mm,trim={0 2mm 0 1mm},clip]{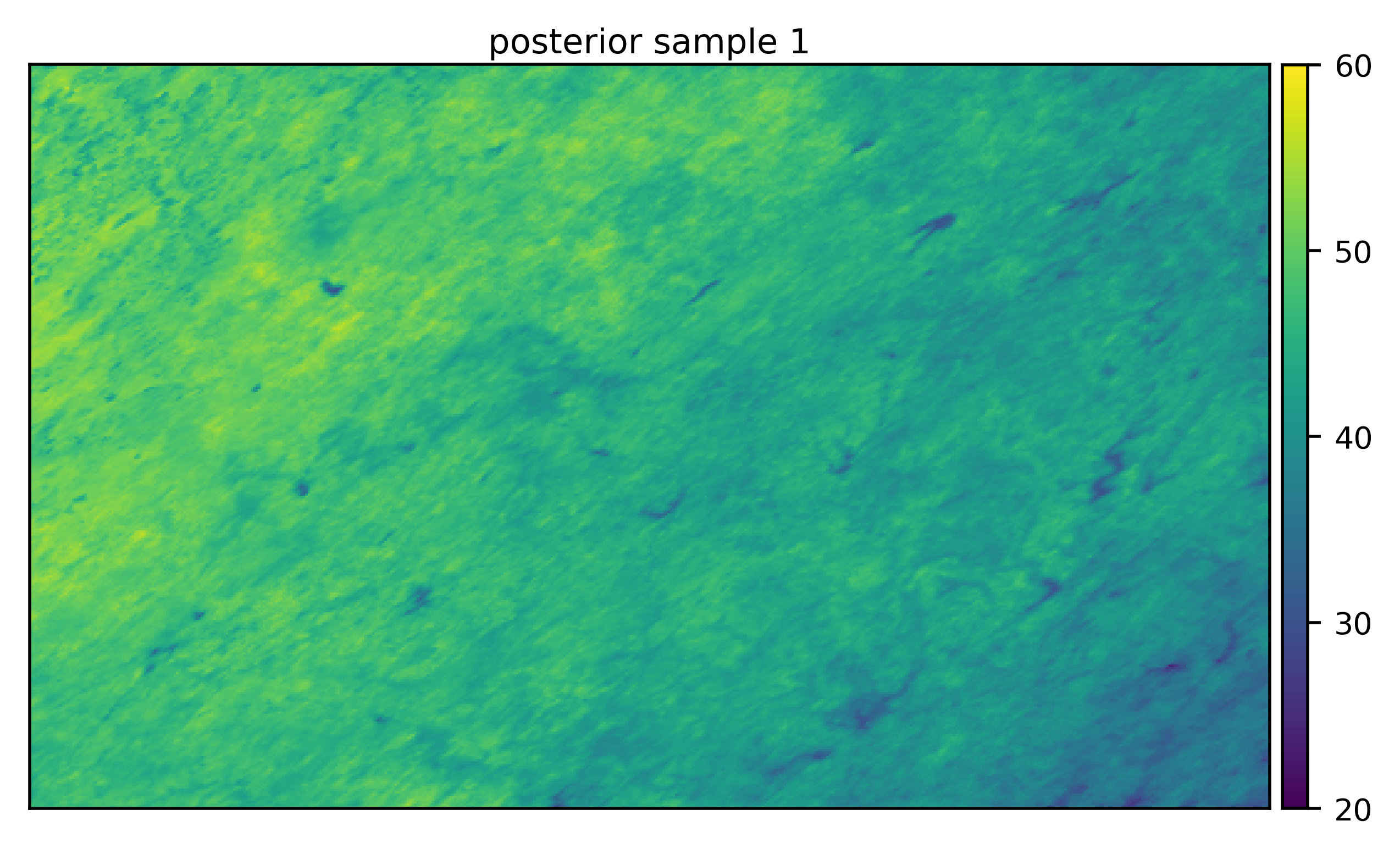}
\includegraphics[width=80mm,trim={0 2mm 0 1mm},clip]{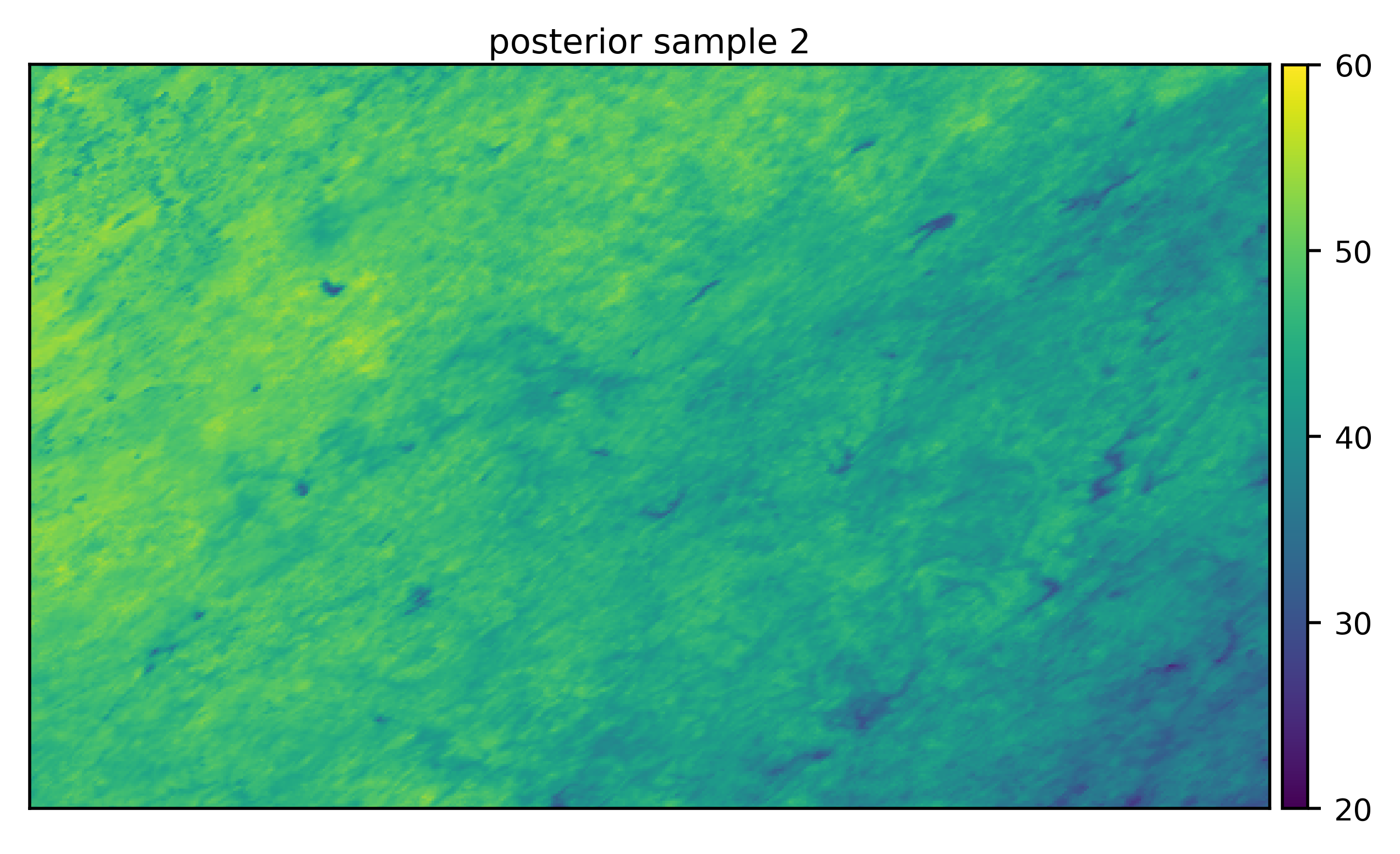}
\caption{Satellite data inpainting by a linear DGMRF with 5 layers of 5$\times$5 seq-filters.}\label{fig:satellite-data}
\end{center}
\end{figure*}
\begin{table*}[ht]
\caption{Standard deviations across seeds for the results in Table~\ref{satellite-table}.}
\label{satellite-table-sd}
\centering
\small
\begin{tabular}{rlllll}
\toprule
Method & MAE & RMSE & CRPS & INT & CVG \\
\midrule
  $\text{seq}_{5\times5,L=1}$ & 0.029 & 0.040 & 0.011 & 0.216 & 0.000 \\ 
  $\text{seq}_{5\times5,L=3}$ & 0.022 & 0.042 & 0.019 & 0.462 & 0.001 \\ 
  $\text{seq}_{5\times5,L=5}$ & 0.037 & 0.051 & 0.012 & 0.461 & 0.001 \\
  $\text{seq}_{3\times3,L=5}$ & 0.066 & 0.097 & 0.039 & 0.171 & 0.003 \\ 
  $+_{L=5}$ & 0.039 & 0.056 & 0.018 & 0.221 & 0.001 \\ 
  $\text{seq}_{5\times5,L=5,\text{NL}}$ & 0.066 & 0.092 & - & - & - \\
\bottomrule
\end{tabular}
\end{table*}
\clearpage
\bibliographystyleS{icml2020}
\bibliographyS{paper}
\end{document}